\documentclass[12pt]{article}

\usepackage[a4paper,left=20mm,right=20mm,top=30mm,bottom=30mm]{geometry}

\usepackage{amsmath,amssymb,amsthm,mathrsfs,bbm}
\usepackage{graphicx}
\usepackage{xcolor}
\usepackage{booktabs}
\usepackage{array}
\usepackage{tabularx}
\usepackage{enumitem}
\usepackage{algorithm}
\usepackage{algpseudocode}
\usepackage{placeins}
\usepackage{authblk}
\usepackage{tikz}
\usetikzlibrary{arrows.meta, positioning, fit, calc}

\usepackage{kotex}

\usepackage[authoryear,round]{natbib}

\usepackage[colorlinks=true,linkcolor=red,citecolor=blue,urlcolor=blue]{hyperref}
\hypersetup{
  pdftitle={Minimax Lower Bound for Estimating Diffusion-based Local Intrinsic Dimension},
  pdfauthor={Jaehee Seo, Wontae Jeong, Jisu Kim}
}

\newcommand{\E}{\mathbb E}
\newcommand{\R}{\mathbb R}
\newcommand{\cM}{\mathcal M}
\newcommand{\cP}{\mathcal P}
\newcommand{\vol}{\operatorname{vol}}
\newcommand{\tr}{\operatorname{tr}}

\newcommand{\TV}{\operatorname{TV}}
\newcommand{\KL}{\operatorname{KL}}
\newcommand{\supp}{\operatorname{supp}}

\newcommand{\dd}{\mathrm d}
\newcommand{\FLIPD}{\mathrm{FLIPD}}

\title{Minimax Lower Bound for Estimating Diffusion-based\\ Local Intrinsic Dimension}

\author{
{\normalsize Jaehee Seo\textsuperscript{1}\qquad  Wontae Jeong\textsuperscript{1}\qquad 
Jisu Kim\textsuperscript{1,$\dagger$}}
\\
Department of Statistics, Seoul National University\textsuperscript{1}
\\
\texttt{\{seojaehee02,\, wtjeong, jkim82133\}@snu.ac.kr}
}

\newenvironment{keywords}
{
  \noindent                             
  \small \textbf{Keywords:} \  
}
{
  \par\vspace{0.5em}                      
}

\newtheorem{theorem}{Theorem}[section]
\newtheorem{assumption*}{Assumption}[section]
\newtheorem{lemma}{Lemma}[section]
\newtheorem{proposition}{Proposition}[section]
\newtheorem{remark}{Remark}[section]

\begin{document}

\maketitle

\begingroup

\renewcommand{\thefootnote}{\fnsymbol{footnote}}

\footnotetext[2]{Corresponding author.}

\endgroup

\begin{abstract}
While diffusion-based methods have recently emerged as effective tools for
probing the intrinsic geometry of high-dimensional data, their statistical difficulty remains largely unexplored. We study estimation of the
finite-scale population functional underlying FLIPD \citep{kamkari2024geometric}, a diffusion-based local
intrinsic dimension (LID) quantity defined through the logarithmic scale
derivative of a Gaussian-smoothed density. Intuitively, Gaussian smoothing
turns local dimension into a scale law: near a $d$-dimensional manifold, the
kernel mass grows like $\sigma^d$, so differentiating with respect to the
noise scale reveals the intrinsic exponent. Under a regular manifold model,
we show uniformly over the model class that the finite-scale field differs
from the manifold dimension $d$ by at most $O(\sigma^2)$. We then establish a
minimax lower bound of order $(n\sigma^d)^{-1}$ for estimating this
finite-scale field from $n$ observations, for
$n^{-1/(2\alpha+d)}\lesssim\sigma\le\sigma_0$. At the smallest scale covered
by our lower-bound construction, the bound becomes the nonparametric rate
$n^{-2\alpha/(2\alpha+d)}$.
\end{abstract}

\begin{keywords}
Local intrinsic dimension, Minimax lower bound, Diffusion, Nonparametric estimation, Manifold learning
\end{keywords}

\section{Introduction}\label{sec:introduction}

Understanding the intrinsic geometry of high-dimensional data has become
increasingly important in modern statistics and machine learning. The
manifold hypothesis suggests that complex data in an ambient space
$\mathbb R^D$ often concentrate near structures whose effective dimension is
much smaller than the ambient dimension $D$ \citep{narayanan2010sample,fefferman2016testing}. This
gap between ambient dimension and intrinsic dimension is not merely a
visualization principle. The \emph{local intrinsic dimension} (LID) is one way to quantify this
geometry. Rather than relying only on a global notion of dimension, LID probes
geometric structure in a neighborhood of a query point and therefore depends
on the scale at which locality is examined. LID has become a
useful tool in representation learning \citep{ansuini2019intrinsic}, and has
found applications in out-of-distribution detection
\citep{wang2021dimensionality}, adversarial robustness
\citep{ma2018characterizing}, and the analysis of generative models
\citep{stanczuk2024diffusion}.

Classical LID estimators primarily rely on local statistics, including
nearest-neighbor distances \citep{facco2017estimating}, likelihood-based estimators \citep{levina2004maximum}, and local PCA
approaches
\citep{fukunaga1971algorithm}. These
methods provide practical and conceptually direct tools for estimating
intrinsic dimension from finite samples. However, they also expose a difficulty: locality is controlled by a neighborhood scale, and
the statistical behavior of the estimator is inseparable from that scale. Recent advances in \emph{diffusion models} (DMs) have introduced a new perspective on
this question by exploiting the geometry of Gaussian-smoothed distributions.
DMs learn score functions of noise-perturbed distributions, and
the noise level naturally indexes a continuum of geometric scales. At a large
noise level, the smoothed density reflects coarse, global structure; at a
small noise level, it is sensitive to the local geometry of the data support.
The Fokker--Planck equation describes how the smoothed density evolves with
the noise scale and thereby connects score-based quantities to derivatives
of the log-density \citep{stanczuk2024diffusion, leung2025convolutions}.
This connection suggests that intrinsic dimension can be recovered from how
quickly Gaussian mass changes as the diffusion scale varies. Following this
principle, diffusion-based LID estimation has recently become an active area
of research \citep{kamkari2024geometric, osada2026local}. A related approach
is LIDL \citep{tempczyk2022lidl}, a likelihood-based method that can also be
implemented with diffusion models by evaluating their likelihoods at
different noise scales \citep{kamkari2024geometric}.

Despite these developments, statistical guarantees for diffusion-based LID
estimation remain limited. Practical diffusion-based estimators involve at
least two sources of error. One comes from learning score and divergence
functions, typically with a neural network. The other appears even before
neural approximation: at a prescribed diffusion scale, Gaussian smoothing
itself induces a scale-dependent local dimension quantity. This paper focuses
on the statistical difficulty of estimating this population quantity from
samples:
\begin{center}
\emph{How fundamentally difficult is it to
estimate the local intrinsic dimension\\ induced by Gaussian smoothing from $n$
observations?}
\end{center}
By isolating this question, we separate the sample-level difficulty of
diffusion-based LID estimation from the additional approximation errors
introduced by learned score networks.
This scale-dependent viewpoint also clarifies the distinction between the
population target and the zero-noise manifold dimension. Under a regular
manifold model, the diffusion-based LID quantity converges to $d$ as
$\sigma\to0$. For any fixed positive $\sigma$, however, it is not simply the
integer $d$: it retains lower-order contributions from local density variation
and manifold geometry. Hence the minimax problem considered here is not model
selection for an unknown manifold dimension, but nonparametric estimation of a
scale-dependent local dimension function.

\subsection{Contributions}\label{sec:contribution}

In this paper, we formalize the finite-scale population target underlying
diffusion-based LID estimation and analyze both its deterministic bias and the
finite-sample statistical limits of its estimation. Our contributions are summarized as follows:

\begin{itemize}

    \item \textbf{Finite-scale statistical target.}
    Building on the FLIPD formulation of diffusion-based LID
    \citep{kamkari2024geometric}, we treat its value at a prescribed noise
    level $\sigma>0$ as the population target of interest, rather than
    identifying it directly with the zero-noise manifold dimension. This
    viewpoint separates finite-scale geometric effects from the statistical
    problem of recovering the field from samples.

    \item \textbf{Uniform finite-scale approximation.}
    Under a regular manifold model, we prove uniformly over the model class
    that
    \[
        T_\sigma(x;f)=d+O(\sigma^2).
    \]
    The absence of a first-order correction follows from the local
    tangent-plane approximation and Gaussian symmetry: odd density terms
    cancel, while curvature and density variation contribute only at second
    order.

    \item \textbf{Finite-sample minimax lower bound.}
    For $n^{-1/(2\alpha+d)}\lesssim\sigma\le\sigma_0$,
    we establish the lower bound for the minimax risk
    \[
        \mathfrak R_{n,\sigma}\gtrsim(n\sigma^d)^{-1}.
    \]
    Here, the factor $n\sigma^d$ reflects the effective number of observations
    available in an intrinsic $\sigma$-neighborhood. At the smallest scale
    covered by the Assouad construction,
    $\sigma\asymp n^{-1/(2\alpha+d)}$, the lower bound has $n$-dependence
    $n^{-2\alpha/(2\alpha+d)}$.
\end{itemize}
\noindent
Therefore, our results quantify both the deterministic bias of $T_\sigma$ relative to $d$ and the sample-level difficulty of estimating $T_\sigma$.

\section{Preliminaries}\label{sec:preliminaries}

This section collects the analytic and geometric ingredients used in the
definition and analysis of the $\sigma$-diffused LID field. The diffusion
part identifies the population quantity induced by additive Gaussian
smoothing, while the manifold part specifies the regularity needed for
uniform small-noise expansions.

\subsection{Notation}

For $a,b\in\mathbb R$, write
$a\wedge b:=\min\{a,b\}$ and $a\vee b:=\max\{a,b\}$. For nonnegative quantities $A$ and $B$, write $A\lesssim B$ if
$A\le CB$ for a constant $C>0$ depending only on the fixed class
parameters, and write $A\asymp B$ if both $A\lesssim B$ and
$B\lesssim A$ hold. The symbol $\|\cdot\|$ denotes the Euclidean norm for vectors, and
$\|\cdot\|_{\mathrm{op}}$ denotes the operator norm.

Throughout the paper, $c,C>0$ denote generic constants whose values
may change from line to line. Unless stated otherwise, these constants
depend only on the fixed class parameters and are independent of
$n$, $\sigma$, $f$, and the query point. They may also depend on the
fixed bump functions introduced below.

\subsection{Diffusion and Additive Gaussian Smoothing}
\label{sec:forward-backward-processes}

Score-based DMs describe the evolution of probability
distributions through a Fokker--Planck equation. In this paper, we only
require the additive Gaussian specialization corresponding to the
population field studied in Section~\ref{sec:LID-field}.

Specifically, $p_\sigma(x;f)$ in \eqref{eq:smoothed-density} is the ambient
density obtained by convolving the data law $P_f$ on $\cM$ with a Gaussian
of standard deviation $\sigma$. For $\sigma>0$, the smoothed density
satisfies
\begin{equation}
\label{eq:gaussian-scale-fokker-planck}
 \partial_\sigma p_\sigma(x;f)
 =
 \sigma\Delta p_\sigma(x;f).
\end{equation}
Consequently, with
$s_\sigma(x;f)=\nabla\log p_\sigma(x;f)$,

\[
 \partial_{\log\sigma}\log p_\sigma(x;f)
 =
 \sigma^2\frac{\Delta p_\sigma(x;f)}{p_\sigma(x;f)}
 =
 \sigma^2\left\{
 \operatorname{div}s_\sigma(x;f)
 +\|s_\sigma(x;f)\|^2
 \right\}.
\]
\noindent
This identity gives the Fokker--Planck representation of the
$\sigma$-diffused LID field in \eqref{eq:flipd-score}.

\subsection{Function Class and Geometry}
\label{sec:holder-manifolds}

\subsubsection{H\"older class on Euclidean domain.}
Let $U\subset\mathbb R^m$ be open, let $E$ be a finite-dimensional
normed vector space, and let $\gamma\in(0,1]$. For a map $h:U\to E$,
define
\begin{equation}
\label{eq:holder-seminorm}
 [h]_{C^{0,\gamma}(U)}
 :=
 \sup_{\substack{u,v\in U\\u\ne v}}
 \frac{\|h(u)-h(v)\|}{\|u-v\|^\gamma}.
\end{equation}
For $s\in\mathbb N_0$, the H\"older space $C^{s,\gamma}(U;E)$
consists of all $s$-times continuously differentiable maps
$h:U\to E$ such that
\begin{equation}
\label{eq:holder-norm}
 \|h\|_{C^{s,\gamma}(U)}
 :=
 \max_{0\le j\le s}\|D^j h\|_\infty
 +[D^s h]_{C^{0,\gamma}(U)}
 <\infty.
\end{equation}
Here $D^0h=h$, and derivatives are equipped with their induced
multilinear operator norms. When the target space $E$ is clear from
context, we simply write $C^{s,\gamma}(U)$.

For $a>0$, put
\[
    s_a:=\lceil a\rceil-1,
    \qquad
    \gamma_a:=a-s_a\in(0,1].
\]

\subsubsection{Reach and Regular Manifold Class.}\label{sec:reach-manifold}

We first recall the concept of reach (first proposed in \cite{federer1959curvature}) before defining the class of $d$-dimensional regular manifolds.

For a closed set $\cM\subset\mathbb R^D$, define the medial axis 
$Med(\cM)$ as the set of points having at least two nearest
points in $\cM$. Then, the reach of $\cM$ is defined by
\begin{equation}
\label{eq:reach-definition}
\tau_\cM
 :=\inf_{z\in \cM,\ y\in Med(\cM)}\|z-y\|,
\end{equation}
with the value $+\infty$ when $\operatorname{Med}(\cM)$ is empty.  Equivalently,
every point at distance less than $\tau_\cM$ from $\cM$ has a
unique nearest point in $\cM$. See Figure~\ref{fig:reach_illustration} for an illustration of the medial axis and the reach.

\begin{figure}
    \centering
    \includegraphics[width=0.5\linewidth]{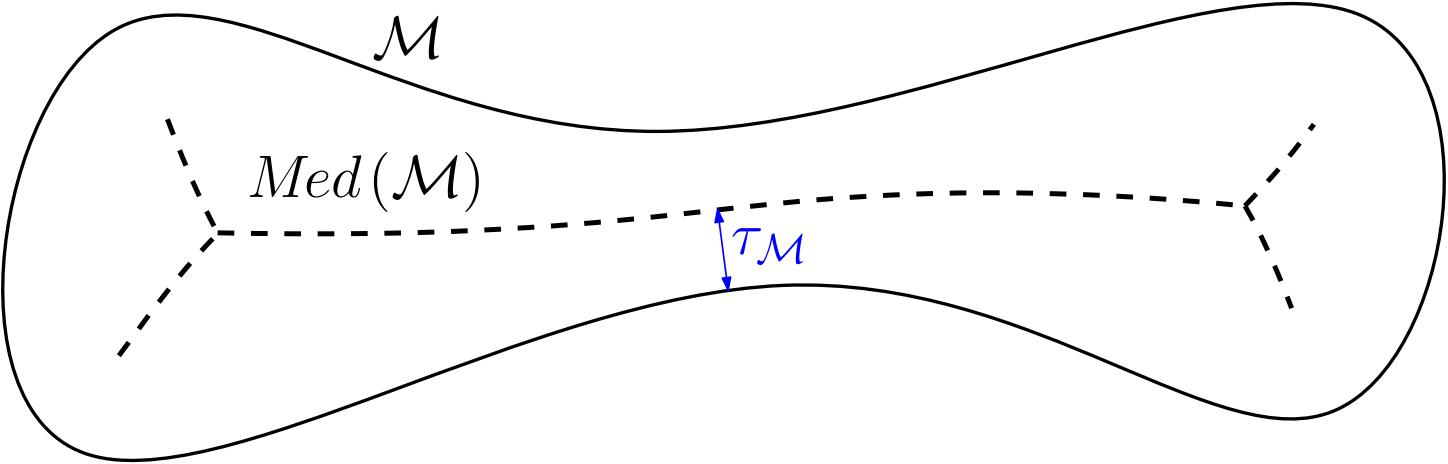}
    \caption{An illustration of the medial axis and the reach of the closed set $\cM$. A narrow bottleneck structure yields a small reach.
    }
    \label{fig:reach_illustration}
\end{figure}

We now define the regular submanifold class. Let $\beta>2$ and $r_\cM:=(4L_\cM)^{-1}$.
We say that a compact set $\cM\subset\mathbb R^D$ belongs to
$\mathcal C^\beta_{d, D, \tau, L_\cM}$ if it is a compact, connected,
boundaryless, embedded $d$-dimensional $C^{s_\beta,\gamma_\beta}$
submanifold with $\tau_\cM\ge \tau$, and, for every $x\in\cM$, there is a map
\begin{equation}
\label{eq:tangent-normal-chart-intrinsic}
 \begin{aligned}
 \Psi_x(v)=x+v+N_x(v),\qquad 
 v\in B_{T_x\cM}(0,r_\cM).
 \end{aligned}
\end{equation}
onto a relatively open neighborhood of $x$ in $\cM$, where
$N_x(v)\in N_x\cM:=(T_x\cM)^\perp$ and
\begin{align*}
 N_x(0)=0,\qquad
 DN_x(0)=0,\\
 \sup_x\sup_{2\le j\le s_\beta}
 \sup_{v\in B_{T_x\cM}(0,r_\cM)}
       \|D^jN_x(v)\|\le L_\cM,\\
 \sup_x[D^{s_\beta}N_x]_{C^{0,\gamma_\beta}}
 \le L_\cM.
\end{align*}
The radius and derivative bounds are part of the class definition and are
therefore uniform in the center $x$.

\noindent
Let $\vol_\cM$ denote the $d$-dimensional Hausdorff measure restricted to $\cM$.

\subsubsection{H\"older class on manifold.}
Suppose $\cM\in\mathcal C^\beta_{d, D, \tau, L_\cM}$ and
$\beta\ge\lceil\alpha\rceil+1$.  We define
\begin{equation}
\label{eq:manifold-holder-norm}
 \|f\|_{\mathcal H^\alpha(\cM)}
 :=\sup_{x\in\cM}
 \|f\circ\Psi_x\|_{C^{s_\alpha,\gamma_\alpha}
       (B_{T_x\cM}(0,r_\cM))},
\end{equation}
where $s_\alpha =
 \lceil \alpha\rceil-1$ and $\gamma_\alpha
 =\alpha-s_\alpha\in(0,1]$. The extra derivative in $\beta\ge\lceil\alpha\rceil+1$ makes the transition
maps between tangent--normal charts regular enough for the chain rule in
$C^{s_\alpha,\gamma_\alpha}$.  Consequently, this norm is equivalent, up to
constants depending only on the manifold-class parameters, to the norm
defined from any uniformly regular finite atlas.  We write
$\mathcal H^\alpha(\cM;L_f)$ for the ball on which the norm in
\eqref{eq:manifold-holder-norm} is at most $L_f$. Also, let $\eta:=(\alpha-2)\wedge1$.

\medskip

The formal local consequences of these definitions are stated in
Appendix~\ref{app:geometry}. In the main text we use them primarily as a
conceptual guide: locally, $\cM$ is well approximated by its tangent plane,
the Jacobian of the tangent--normal chart is uniformly controlled, and the
intrinsic volume of an ambient ball of radius $r$ centered on the manifold is
of order $r^d$.

\subsubsection{Role of the Geometric Conditions}
\label{sec:role-geometric-conditions}

The regular manifold model is used only through uniform local consequences,
rather than through a particular global parametrization. The positive reach condition prevents self-approach at scales below $\tau$
and ensures that points near $\cM$ have a well-defined nearest point on the
manifold. Such a condition is standard in statistical work on geometric
inference, including reach estimation, tangent-space and curvature
estimation, and minimax dimension estimation
\citep{aamari2019estimating, aamari2019nonasymptotic, Kim2019minimax}. This is
important for diffusion-based LID because the Gaussian smoothing is performed
in the ambient space $\R^D$, whereas the data distribution is supported on
the lower-dimensional set $\cM$. Without a lower reach bound, an ambient
Gaussian ball centered at $x\in\cM$ could intersect geometrically unrelated
parts of the support at arbitrarily small scales, making a local dimension
field unstable.

The tangent--normal charts in \eqref{eq:tangent-normal-chart-intrinsic}
make this locality explicit. Around each query point $x$, the manifold can
be written as a graph over the tangent space,
\[
    F_x(u)=x+U_xu+G_x(u),
    \qquad G_x(0)=DG_x(0)=0.
\]
The vanishing first derivative means that the leading-order local model is
the tangent plane. Curvature enters through the second fundamental form
$\mathrm{II}_x=D^2G_x(0)$ and therefore affects Gaussian integrals only at
second order. This is the geometric reason that the finite-scale bias in
Theorem~\ref{thm:small-noise-limits} is $O(\sigma^2)$ rather than
$O(\sigma)$. Accordingly, the second-order coefficient in the kernel-mass expansion
depends on both derivatives of the sampling density and the local extrinsic
geometry of $\cM$.

The H\"older condition on $f$ plays the analogous role for density
variation. In local coordinates,
\[
    f(F_x(u))=f(x)+D(f\circ F_x)(0)[u]
      +\frac12D^2(f\circ F_x)(0)[u,u]+\text{higher-order terms}.
\]
When this expansion is integrated against a centered Gaussian kernel, the
first-order term cancels by symmetry. The first nonzero density contribution
is therefore again quadratic in the scale. The assumptions
$\alpha>2$ and $\beta\ge\lceil\alpha\rceil+1$ ensure that this Taylor
expansion and the corresponding change of coordinates can be made uniformly
over all $x\in\cM$ and all $f\in\mathcal F_\alpha(\cM)$.

Together, these conditions yield the basic small-noise picture
\[
    Z_\sigma(f,x)
    \approx
    (2\pi)^{d/2}\sigma^d f(x),
    \qquad
    p_\sigma(x;f)
    \approx
    (2\pi)^{-(D-d)/2}\sigma^{d-D} f(x),
\]
uniformly over the model class. Thus the exponent of $\sigma$ in the
ambient smoothed density separates the intrinsic dimension $d$ from the
ambient normalization $D$. The FLIPD target extracts exactly this exponent
by applying the logarithmic scale derivative. The lower-order terms in the
expansion are not discarded in the statistical problem: for fixed
$\sigma>0$, $T_\sigma(\cdot;f)$ remains a real-valued, density-dependent
field, and Theorem~\ref{thm:minimax-lower} studies the difficulty of
estimating this finite-scale population quantity.

The volume growth property formalized in Appendix~\ref{app:geometry} is also
the source of the effective sample size $n\sigma^d$ appearing in our minimax
lower bound. At
scale $\sigma$, a local Gaussian window sees intrinsic volume of order
$\sigma^d$, so among $n$ observations only about $n\sigma^d$ samples carry
substantial information about the value of the field near a fixed query
point. The lower bound in Section~\ref{sec:minimax-lower} formalizes this
heuristic by constructing many well-separated local perturbations of the
density on balls of radius comparable to $\sigma$.

\section{Main Results}
\label{sec:model-main}

We now state the population target, deterministic finite-scale bias bound, and
minimax lower bound. The formal local geometric facts used in the proofs are
stated in Appendix~\ref{app:geometry}. A proof of the finite-scale bias (Theorem~\ref{thm:small-noise-limits}) is
given in Appendix~\ref{app:proof-bias}, and a proof of the minimax lower bound (Theorem~\ref{thm:minimax-lower})
is given in Appendix~\ref{app:minimax-lower}.

\subsection{Statistical Models}\label{sec:statistical-models}
Fix integers $1\le d<D$, density smoothness $\alpha>2$, manifold smoothness $\beta\ge\lceil\alpha\rceil+1$, reach bound $\tau>0$, and norm bound $L_\cM\ge1$. Let $\cM\subset\R^D$ be contained in the regular embedded manifold class $\mathcal C^\beta_{d, D, \tau, L_\cM}$, defined in Section~\ref{sec:reach-manifold}. Let $f_0:=\vol_\cM(\cM)^{-1}$ and fix constants $0<c_-<f_0<c_+<\infty$ and $L_f>\|f_0\|_{\mathcal H^\alpha(\cM)}$.  We distinguish the density class from the induced class of probability measures by setting
\begin{align*}
 \mathcal F_\alpha(\cM)
 &:=
 \left\{
 f\in\mathcal H^\alpha(\cM):
 \int_\cM f\,\dd\vol_\cM=1,\quad
 c_-\le f\le c_+,\quad
 \|f\|_{\mathcal H^\alpha(\cM)}\le L_f
 \right\},\\
 \cP_\alpha(\cM,d)
 &:=
 \left\{P_f:P_f(\dd x)=f(x)\,\dd\vol_\cM(x),\ 
 f\in\mathcal F_\alpha(\cM)\right\}.
\end{align*}
\noindent
Here $\mathcal H^\alpha(\cM)$ is defined using the tangent--normal charts in
Section~\ref{sec:holder-manifolds}.  The constants in all bounds may depend on
$D,d,\alpha,\beta,L_\cM,\tau$ and $c_-,c_+,L_f$, but not on $n$, the base scale
$\sigma$, the density $f$, or the query point. For $\sigma>0$, we define
\begin{align}
 K_{\sigma,x}(u)
 :=
 \exp\!\left(-\frac{\|u-x\|^2}{2\sigma^2}\right),
 \qquad &
 Z_\sigma(f,x):=
 \int_\cM K_{\sigma,x}(u)f(u)\,d\!\vol_\cM(u),
 \label{eq:kernel-mass}\\[0.4cm]
 p_\sigma(x;f)
 :=
 (2\pi)^{-D/2}\sigma^{-D}Z_\sigma(f,x),
 \qquad &
 s_\sigma(x;f)
 :=
 \nabla_x\log p_\sigma(x;f).
 \label{eq:smoothed-density}
\end{align}

\bigskip

\subsection{Diffusion-based LID Field}
\label{sec:LID-field}\label{sec:FLIPD-field}

We consider a LID field as a diffusion scale-dependent population map
\[
    x\longmapsto T_\sigma(x;f),
\]
defined from the Gaussian-smoothed distribution at a prescribed noise scale $\sigma>0$. The geometric principle is the small-noise scaling of the Gaussian convolution.  At a point $x\in\cM$, a Gaussian kernel of radius $\sigma$ sees a neighborhood of $\cM$ whose intrinsic volume is of order $\sigma^d$, whereas the ambient Gaussian normalizing factor is of order $\sigma^{-D}$.  More precisely, under the regularity conditions above and letting $\partial_{\log\sigma}:=\sigma\,\partial_\sigma$, we have
\begin{equation}\label{eq:log_score}
    \partial_{\log\sigma}\log p_\sigma(x;f)
    =d-D+O(\sigma^2),
\end{equation}
so adding the ambient dimension $D$ recovers $d$ in the zero-noise limit.

\medskip
Following the FLIPD formulation of \cite{kamkari2024geometric}, we take the
corresponding finite-noise population quantity as our statistical target.
For each $x\in\cM$, write
\begin{align}
    T_\sigma(x;f)
    :=
    D+\partial_{\log\sigma}\log p_\sigma(x;f)
    =
    \sigma\,\partial_\sigma\log Z_\sigma(f,x).
    \label{eq:flipd-target}
\end{align}
The equality in \eqref{eq:flipd-target} follows from
\[
    \log p_\sigma(x;f)
    =
    \log Z_\sigma(f,x)
    -D\log\sigma
    -\frac{D}{2}\log(2\pi).
\]
In particular, if $p_\sigma(x;f)\propto\sigma^{d-D}$ holds locally, then
\[
    T_\sigma(x;f)=D+(d-D)=d.
\]

For additive Gaussian smoothing, the scale derivative can also be written
directly in terms of the score
$s_\sigma(x;f)=\nabla_x\log p_\sigma(x;f)$. The Gaussian heat equation
$\partial_\sigma p_\sigma=\sigma\Delta p_\sigma$ and the identity
$\Delta p_\sigma/p_\sigma=\Delta\log p_\sigma+\|\nabla\log p_\sigma\|^2$
give the exact score representation
\begin{equation}
\label{eq:flipd-score}
    T_\sigma(x;f)
    =
    D+\sigma^2
    \left\{
        \operatorname{div}s_\sigma(x;f)
        +\|s_\sigma(x;f)\|^2
    \right\}.
\end{equation}
This is the additive-Gaussian, or variance-exploding, specialization of the Fokker--Planck derivative. The divergence term satisfies
\[
    \operatorname{div}s_\sigma=\tr\{\nabla_xs_\sigma\}.
\]

\subsection{Finite-Scale Bias}\label{sec:bias}

Before studying statistical estimation error, we characterize the deterministic finite-scale deviations of $\sigma$-diffused LID fields from the true intrinsic dimension. The following theorem shows that the Gaussian convolution preserves the leading-order volume scaling $\sigma^d$, while density variation and local manifold geometry appear only as second-order corrections.

\begin{theorem}[Finite-scale bias]
\label{thm:small-noise-limits}
There exist constants $\sigma_0>0$ and $C<\infty$ such that, uniformly over
$f\in\mathcal F_\alpha(\cM)$, $x\in\cM$, and $0<\sigma\le\sigma_0$,
\begin{equation}
 |T_\sigma(x;f)-d|\le C\sigma^2.
 \label{eq:flipd-limit}
\end{equation}
\end{theorem}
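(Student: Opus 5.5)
The plan is to write $T_\sigma(x;f)=\sigma\partial_\sigma Z_\sigma(f,x)/Z_\sigma(f,x)$ and differentiate under the integral:
\[
\sigma\partial_\sigma Z_\sigma(f,x)=\int_\cM \frac{\|u-x\|^2}{\sigma^2}K_{\sigma,x}(u)f(u)\,\dd\vol_\cM(u)=:W_\sigma(f,x).
\]
Differentiation under the integral is justified because $\cM$ is compact and $f$ is bounded. It then suffices to show, uniformly in $x$, $f$ and $\sigma\le\sigma_0$,
\[
Z_\sigma=(2\pi)^{d/2}\sigma^d\{f(x)+O(\sigma^2)\},\qquad W_\sigma=(2\pi)^{d/2}\sigma^d\{d\,f(x)+O(\sigma^2)\}.
\]
Since $f(x)\ge c_->0$, the ratio is then $d+O(\sigma^2)$. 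The leading constants match because $\int_{\R^d}\|v\|^2\sigma^{-2}e^{-\|v\|^2/2\sigma^2}\dd v=d(2\pi)^{d/2}\sigma^d$.

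\textbf{Localization.} Split $\cM$ into $\cM\cap B(x,r_0)$ and its complement, with $r_0<r_\cM/2$ fixed. On the complement the integrands are at most $C\sigma^{-2}e^{-r_0^2/2\sigma^2}$ times $\vol_\cM(\cM)$. The volume $\vol_\cM(\cM)$ is bounded by $1/c_-$, because $\int f=1$ and $f\ge c_-$. This contribution is therefore $O(\sigma^{d+2})$, in fact exponentially small.

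\textbf{Chart integral.} On the local piece, use the chart $\Psi_x(v)=x+v+N_x(v)$. By the appendix geometry facts (reach together with $r_\cM$), $\Psi_x$ covers $\cM\cap B(x,r_0)$ and is injective there, and its Jacobian satisfies $J_x(v)=1+O(\|v\|^2)$, since $DN_x(0)=0$ and $D^2N_x$ is bounded. In addition:
\begin{itemize}
\item $\|\Psi_x(v)-x\|^2=\|v\|^2+\|N_x(v)\|^2$ with $\|N_x(v)\|\le L_\cM\|v\|^2$, so $K_{\sigma,x}(\Psi_x(v))=e^{-\|v\|^2/2\sigma^2}(1+O(\|v\|^4/\sigma^2))$ on the region where $\|v\|^4/\sigma^2$ is bounded.
\item $f\circ\Psi_x(v)=f(x)+D(f\circ\Psi_x)(0)[v]+R(v)$ with $|R(v)|\le L_f\|v\|^2$, using $\alpha>2$ and the $\mathcal H^\alpha$ bound.
\end{itemize}
Multiply these expansions. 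The linear term integrates to zero against the radial weights $e^{-\|v\|^2/2\sigma^2}$ and $\|v\|^2e^{-\|v\|^2/2\sigma^2}$ by symmetry. The remainders are bounded by $C\int(\|v\|^2+\|v\|^4\sigma^{-2}+\dots)(1+\|v\|^2/\sigma^2)e^{-c\|v\|^2/\sigma^2}\dd v=O(\sigma^{d+2})$. Finally, extend the $v$-integral from the chart domain to $\R^d$ at exponentially small cost.

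\textbf{Main obstacle.} The delicate step is the kernel perturbation: $e^{-\|N_x(v)\|^2/2\sigma^2}$ is not uniformly close to $1$ when $\|v\|\gtrsim\sigma^{1/2}$. I would handle this with the bound
\[
\bigl|e^{-\|N\|^2/2\sigma^2}-1\bigr|\le \|N\|^2/2\sigma^2\le L_\cM^2\|v\|^4/2\sigma^2,
\]
which holds everywhere since $0\le \|N\|^2/2\sigma^2$. Combined with the dominating factor $e^{-\|v\|^2/2\sigma^2}$, this makes the error term $\sigma^{-2}\int\|v\|^4e^{-\|v\|^2/2\sigma^2}\dd v=O(\sigma^{d+2})$ with no need to split into regions. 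All constants depend only on $L_\cM$, $L_f$, $c_\pm$, $d$ and $r_0$, which gives the required uniformity. Choose $\sigma_0$ so that the $O(\sigma^2)$ error in $Z_\sigma$ is at most $c_-/2$, so the denominator stays bounded below.
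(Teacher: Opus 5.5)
Your argument is correct and follows essentially the paper's route (Lemma~\ref{lem:differentiable-kernel-mass} plus the quotient bound). Like the paper, you differentiate under the integral, expand $Z_\sigma$ and $\sigma\partial_\sigma Z_\sigma$ in the tangent--normal chart, cancel the linear density term by Gaussian symmetry, and discard exponentially small tails. The paper goes one order further (the coefficient $B_f$ with an $O(\sigma^{2+\eta})$ remainder), which this theorem does not need, so your $O(\sigma^2)$ version using $1-e^{-y}\le y$ is a valid simplification. One radius needs tidying: for the single chart to cover $\cM\cap B_D(x,\rho)$, $\rho$ must also be small relative to $\tau$, e.g.\ the separation radius $c_0r_0$ from Proposition~\ref{prop:uniform-normal-atlas}; $\rho<r_\cM/2$ alone is not enough.
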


\noindent
This result separates deterministic finite-scale approximation from the
statistical estimation problem studied below. In particular, the minimax
lower bound concerns estimation of the finite-scale FLIPD field
$T_\sigma(\cdot;f)$ rather than recovery of the zero-noise integer dimension
$d$. Thus Theorem~\ref{thm:small-noise-limits} is an approximation result
relating $T_\sigma$ to $d$, whereas Theorem~\ref{thm:minimax-lower} is an
estimation result for $T_\sigma$ itself.

\subsection{Minimax Lower Bound}\label{sec:minimax-lower}

We now establish a minimax lower bound for estimating the FLIPD field under the model defined in Section~\ref{sec:statistical-models}. Unlike the problem of selecting the correct volume dimension of a manifold \citep{Kim2019minimax}, our analysis is based on expected-square risk for estimating a real-valued finite-scale LID field.

We suppose an estimator based on $X_1,\ldots,X_n\stackrel{\mathrm{i.i.d.}}\sim P_f$ returns a measurable field $\widehat T_\sigma(x)$ at the prescribed scale $\sigma$. Define the expected-square risk by
\begin{equation}
\label{eq:fixed-scale-risk}
 \mathcal R_{n,\sigma}(\widehat T_\sigma,f)
 :=\E\left[\,\int_\cM
 \left|\widehat T_\sigma(x)-T_\sigma(x;f)\right|^2
 f(x)\,d\!\vol_\cM(x)\right],
\end{equation}
and define the minimax risk by
\begin{equation}
\label{eq:minimax-risk}
 \mathfrak R_{n,\sigma}
 :=\,\inf_{\widehat T_{\sigma}}\sup_{P_f\in\cP_\alpha(\cM,d)}
 \mathcal R_{n,\sigma}(\widehat T_\sigma,f),
\end{equation}
where the infimum is taken over all measurable fields $\widehat T_\sigma(x)$. Throughout the remainder of this paper, set $h_n:=n^{-1/(2\alpha+d)}$. 

\begin{theorem}[Minimax lower bound]
\label{thm:minimax-lower}
There are constants $\sigma_0>0$ and $n_0<\infty$ such that, for every $n\ge n_0$ and $h_n\le\sigma\le\sigma_0$,
\begin{equation}
 \mathfrak R_{n,\sigma}
 \gtrsim (n\sigma^d)^{-1}.
 \label{eq:main-flipd-lower}
\end{equation}
\end{theorem}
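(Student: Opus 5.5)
The plan is to prove Theorem~\ref{thm:minimax-lower} with Assouad's lemma, using a hypercube of local density perturbations of $f_0$ supported on intrinsic balls of radius comparable to $\sigma$. Fix a large constant $A$ and a smooth radial bump $\psi$ on $\R^d$ supported in the unit ball. Choose $\psi$ so that two conditions hold. First, $\int\psi=0$. Second, the Gaussian-weighted quantity $\kappa_\psi:=\int w\cdot\nabla\psi(w)\,\phi(w)\,\dd w$ is nonzero, where $\phi$ is the standard Gaussian density; this is achievable because a radial $\psi$ of mean zero can still have a nonzero Gaussian-weighted radial moment. Let $x_1,\dots,x_m\in\cM$ be an $A\sigma$-separated maximal packing, so $m\asymp\sigma^{-d}$ by the volume-growth facts of Appendix~\ref{app:geometry}. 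For $\omega\in\{0,1\}^m$ set
\[
 f_\omega(u)=f_0\Bigl(1+\delta\sum_{j=1}^m\omega_j\,\psi_j(u)\Bigr),\qquad \psi_j:=\psi\bigl(\Psi_{x_j}^{-1}(u)/\sigma\bigr).
\]
Here the chart-level mean-zero condition is corrected by an $O(\sigma)$ Jacobian adjustment, or by a paired $\pm$ bump, so that $\int f_\omega\,\dd\vol_\cM=1$ exactly.

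The next step is membership in the model. The constraint $c_-\le f_\omega\le c_+$ holds once $\delta$ is small. By the chain rule through the charts, which is valid because $\beta\ge\lceil\alpha\rceil+1$, we have $\|f_\omega-f_0\|_{\mathcal H^\alpha}\lesssim\delta\sigma^{-\alpha}$. Since $L_f>\|f_0\|_{\mathcal H^\alpha}$, membership holds provided $\delta\le c\,\sigma^\alpha$. I will take $\delta^2=c'/(n\sigma^d)$. Then $\delta\le c\sigma^\alpha$ is equivalent to $\sigma\ge c''n^{-1/(2\alpha+d)}=c''h_n$, which is exactly the admissible range, and $n_0$ is chosen so that $\delta$ is small. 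For the information bound, let $\omega,\omega'$ differ only in coordinate $j$. Then $\chi^2(P_{f_\omega},P_{f_{\omega'}})\lesssim\delta^2\sigma^d$, so $\KL(P^n_{f_\omega},P^n_{f_{\omega'}})\lesssim n\delta^2\sigma^d\le c'$, and the total variation between neighbouring product measures stays bounded away from $1$.

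The main obstacle is separation in the loss. I need $|T_\sigma(x;f_\omega)-T_\sigma(x;f_{\omega'})|\ge c\,\delta$ for all $x$ in an intrinsic ball $B_j$ of radius $\epsilon\sigma$ around $x_j$, uniformly in the remaining coordinates. To get this, I will linearize $T_\sigma=\sigma\partial_\sigma\log Z_\sigma$ in $\delta$:
\[
 T_\sigma(x;f_\omega)-T_\sigma(x;f_0)=\sigma\partial_s\Bigl[\frac{Z_s(f_\omega-f_0,x)}{Z_s(f_0,x)}\Bigr]_{s=\sigma}+O(\delta^2).
\]
The derivative acts on the Gaussian scale $s$ while the bump width stays fixed at $\sigma$. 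Passing to the tangent chart at $x_j$ with the uniform expansions behind Theorem~\ref{thm:small-noise-limits}, the $j$-th contribution at $x=x_j$ equals $\delta\,\kappa_\psi+O(\delta\sigma)$. By continuity, the same bound with a slightly smaller constant holds on $B_j$. Contributions from bumps $k\ne j$ are controlled by Gaussian tails: $\sum_{k\ne j}e^{-c A^2|k-j|^2}$ is at most $e^{-cA^2}$ times a constant, and this is small for large $A$. These contributions also cancel in the difference $f_\omega$ versus $f_{\omega'}$ to first order. The $O(\delta^2)$ remainder needs $Z_s(f_0,x)\asymp s^d$ together with bounded derivative ratios, which are supplied by Appendix~\ref{app:geometry}.

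Finally, the weight $f\ge c_-$ and $\vol_\cM(B_j)\asymp\sigma^d$ make the loss separable into a sum over $j$ with per-coordinate separation $\gtrsim\delta^2\sigma^d$. Assouad's lemma then gives
\[
\mathfrak R_{n,\sigma}\gtrsim m\cdot\delta^2\sigma^d\cdot\bigl(1-\max\TV\bigr)\gtrsim\sigma^{-d}\cdot\frac{\sigma^d}{n\sigma^d}=(n\sigma^d)^{-1}.
\]
Here $\sigma_0$ is taken small enough that the chart expansions and the $O(\delta\sigma)$ corrections are valid.
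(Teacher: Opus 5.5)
Your proposal is correct and follows essentially the paper's route: an Assouad hypercube of mean-zero bumps of width $\sigma$ with amplitude $\asymp(n\sigma^d)^{-1/2}$, H\"older admissibility forcing $\sigma\gtrsim h_n$, $O(1)$ KL per flip, and separation from the linear response $r\partial_r(\gamma_r*\psi)|_{r=1}=\mathcal A\psi$ of the ratio $Z_s(f_0\psi_j,x)/Z_s(f_0,x)$. The differences are only technical. Your radial profile gives $\mathcal A\psi(0)=\int(\|w\|^2-d)\psi\phi=\kappa_\psi\neq0$, so you get pointwise separation on a ball of radius $\epsilon\sigma$, whereas the paper takes $\psi=\partial_1\zeta$ and uses a Fourier argument for $L^2$ separation over $F_a(sB_d(0,R))$. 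You also handle the remote bumps by exact linearity of $Z_s$ in $f$ plus a uniform $O(\delta^2)$ remainder, which needs only $\|f_\omega-f_0\|_\infty\lesssim\delta$, instead of the paper's path-Taylor comparison with the isolated bump.
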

\noindent
The theorem is stated at the level of the statistical experiment generated
by $X_1,\ldots,X_n$. Hence it applies to any estimator constructed solely
from these observations, including procedures that first fit a score or
divergence model and subsequently evaluate a diffusion-based LID functional.

\begin{remark}[Interpretation of the lower bound]
If $\sigma$ is fixed independently of $n$, the lower bound in
\eqref{eq:main-flipd-lower} has $n^{-1}$ dependence. This statement concerns
only a lower bound and, without a matching upper bound, does not by itself
establish parametric optimality. At the smallest scale covered by our construction,
\[
    \sigma\asymp h_n=n^{-1/(2\alpha+d)},
\]
the lower bound becomes
\[
    n^{-2\alpha/(2\alpha+d)}.
\]
The scale $h_n$ arises from the H\"older admissibility condition in the
Assouad construction, rather than from an optimization over $\sigma$.
Accordingly, we do not interpret $h_n$ as an optimal tuning scale for
estimating the zero-noise dimension $d$.
\end{remark}
\section{Conclusion}\label{sec:conclusion}

We studied the finite-scale population functional underlying FLIPD under a
regular manifold model. Our analysis separates two effects that are often
conflated in diffusion-based intrinsic-dimension estimation. First, the
finite-noise population quantity differs from the zero-noise manifold
dimension by $O(\sigma^2)$, with the second-order behavior arising from local
density variation and manifold geometry. Second, even when the manifold and
its dimension are treated as known, estimating this finite-scale field from
$n$ samples has minimax squared risk at least of order
$(n\sigma^d)^{-1}$ over the range of scales considered here. Thus the
statistical difficulty is governed by the intrinsic local sample size
$n\sigma^d$, rather than directly by the ambient dimension $D$.

Our lower bound concerns estimation of the finite-scale field on a fixed
smooth manifold; it is not a minimax result for recovering an unknown
manifold dimension, and we do not establish a matching upper bound in the
present work. Extending the analysis to unknown or heterogeneous geometric
supports, including stratified spaces with spatially varying dimension, is an
important direction for future research
\citep{aamari2024theory,martinez2026deep}.

Another important direction is to connect this population-level theory to
practical estimators based on pretrained DMs by explicitly
accounting for the approximation and estimation errors associated with
learned score and divergence fields. The statistical properties of related
LID estimators also remain to be understood, including the likelihood-based
LIDL \citep{tempczyk2022lidl} and the diffusion-based LHSD
\citep{osada2026local}. LIDL can also be implemented using diffusion models
for likelihood evaluation \citep{kamkari2024geometric}.

\section{Acknowledgement}\label{sec:ack}

Jaehee Seo was supported by the Next Generation Scholarship for Basic Studies (Type C) from Seoul National University.

\clearpage

\bibliographystyle{chicago}
\bibliography{reference}

@article{aamari2019estimating,
  author  = {Aamari, Eddie and Kim, Jisu and Chazal, Fr{\'e}d{\'e}ric and Michel, Bertrand and Rinaldo, Alessandro and Wasserman, Larry},
  title   = {Estimating the Reach of a Manifold},
  journal = {Electronic Journal of Statistics},
  volume  = {13},
  number  = {1},
  pages   = {1359--1399},
  year    = {2019}
}

@article{aamari2019nonasymptotic,
  author  = {Aamari, Eddie and Levrard, Cl{\'e}ment},
  title   = {Nonasymptotic Rates for Manifold, Tangent Space and Curvature Estimation},
  journal = {The Annals of Statistics},
  volume  = {47},
  number  = {1},
  pages   = {177--204},
  year    = {2019}
}

@article{federer1959curvature,
  author  = {Federer, Herbert},
  title   = {Curvature Measures},
  journal = {Transactions of the American Mathematical Society},
  volume  = {93},
  number  = {3},
  pages   = {418--491},
  year    = {1959}
}

@inproceedings{kamkari2024geometric,
  author    = {Kamkari, Hamidreza and Ross, Brendan Leigh and Hosseinzadeh, Rasa and Cresswell, Jesse C. and Loaiza-Ganem, Gabriel},
  title     = {A Geometric View of Data Complexity: Efficient Local Intrinsic Dimension Estimation with Diffusion Models},
  booktitle = {Advances in Neural Information Processing Systems},
  volume    = {37},
  pages     = {38307--38354},
  year      = {2024}
}

@article{Kim2019Minimax,
  author  = {Kim, Jisu and Rinaldo, Alessandro and Wasserman, Larry},
  title   = {Minimax Rates for Estimating the Dimension of a Manifold},
  journal = {Journal of Computational Geometry},
  volume  = {10},
  number  = {1},
  pages   = {42--95},
  year    = {2019}
}

@inproceedings{osada2026local,
  author    = {Osada, Genki},
  title     = {Local {Hessian} Spectral Filtering for Robust Intrinsic Dimension Estimation},
  booktitle = {Forty-third International Conference on Machine Learning},
  year      = {2026}
}

@article{aamari2024theory,
  author  = {Aamari, Eddie and Berenfeld, Cl{\'e}ment},
  title   = {A Theory of Stratification Learning},
  journal = {arXiv preprint arXiv:2405.20066},
  year    = {2024}
}

@article{ansuini2019intrinsic,
  title={Intrinsic dimension of data representations in deep neural networks},
  author={Ansuini, Alessio and Laio, Alessandro and Macke, Jakob H and Zoccolan, Davide},
  journal={Advances in Neural Information Processing Systems},
  volume={32},
  year={2019}
}

@article{facco2017estimating,
  title={Estimating the intrinsic dimension of datasets by a minimal neighborhood information},
  author={Facco, Elena and d’Errico, Maria and Rodriguez, Alex and Laio, Alessandro},
  journal={Scientific reports},
  volume={7},
  number={1},
  pages={12140},
  year={2017},
  publisher={Nature Publishing Group UK London}
}

@article{fukunaga1971algorithm,
  title={An algorithm for finding intrinsic dimensionality of data},
  author={Fukunaga, Keinosuke and Olsen, David R},
  journal={IEEE Transactions on computers},
  volume={100},
  number={2},
  pages={176--183},
  year={1971},
  publisher={IEEE}
}

@article{leung2025convolutions,
  author  = {Leung, Kin Kwan and Hosseinzadeh, Rasa and Loaiza-Ganem, Gabriel},
  title   = {On Convolutions, Intrinsic Dimension, and Diffusion Models},
  journal = {Transactions on Machine Learning Research},
  issn    = {2835-8856},
  year    = {2025}
}

@article{levina2004maximum,
  title={Maximum likelihood estimation of intrinsic dimension},
  author={Levina, Elizaveta and Bickel, Peter},
  journal={Advances in neural information processing systems},
  volume={17},
  year={2004}
}

@inproceedings{ma2018characterizing,
  author    = {Ma, Xingjun and Li, Bo and Wang, Yisen and Erfani, Sarah M. and Wijewickrema, Sudanthi and Schoenebeck, Grant and Song, Dawn and Houle, Michael E. and Bailey, James},
  title     = {Characterizing Adversarial Subspaces Using Local Intrinsic Dimensionality},
  booktitle = {International Conference on Learning Representations},
  year      = {2018}
}

@article{martinez2026deep,
  author  = {Martinez, Randy and Tang, Rong and Lin, Lizhen},
  title   = {A Deep Generative Approach to Stratified Learning},
  journal = {arXiv preprint arXiv:2604.10650},
  year    = {2026}
}

@inproceedings{stanczuk2024diffusion,
  title={Diffusion models encode the intrinsic dimension of data manifolds},
  author={Stanczuk, Jan Pawel and Batzolis, Georgios and Deveney, Teo and Sch{\"o}nlieb, Carola-Bibiane},
  booktitle={Forty-first International Conference on Machine Learning},
  year={2024}
}

@inproceedings{tempczyk2022lidl,
  title={Lidl: Local intrinsic dimension estimation using approximate likelihood},
  author={Tempczyk, Piotr and Michaluk, Rafa{\l} and Garncarek, Lukasz and Spurek, Przemys{\l}aw and Tabor, Jacek and Golinski, Adam},
  booktitle={International Conference on Machine Learning},
  pages={21205--21231},
  year={2022},
  organization={PMLR}
}

@inproceedings{wang2021dimensionality,
  author    = {Wang, Qizhou and Erfani, Sarah M. and Leckie, Christopher and Houle, Michael E.},
  title     = {A Dimensionality-Driven Approach for Unsupervised Out-of-Distribution Detection},
  booktitle = {Proceedings of the 2021 SIAM International Conference on Data Mining},
  pages     = {118--126},
  publisher = {SIAM},
  year      = {2021}
}

@inproceedings{narayanan2010sample,
  author    = {Narayanan, Hariharan and Mitter, Sanjoy},
  title     = {Sample Complexity of Testing the Manifold Hypothesis},
  booktitle = {Advances in Neural Information Processing Systems},
  volume    = {23},
  pages     = {1786--1794},
  year      = {2010}
}

@article{fefferman2016testing,
  author  = {Fefferman, Charles and Mitter, Sanjoy and Narayanan, Hariharan},
  title   = {Testing the Manifold Hypothesis},
  journal = {Journal of the American Mathematical Society},
  volume  = {29},
  number  = {4},
  pages   = {983--1049},
  doi     = {10.1090/jams/852},
  year    = {2016}
}

\clearpage

\appendix

\begin{flushleft}
{\huge\bfseries Appendix}
\end{flushleft}

\section{Local Geometry and Kernel Localization}
\label{app:geometry}

We use the uniformly
controlled tangent--normal charts supplied by the regular manifold class.
The following proposition collects the local geometric properties used in the
subsequent arguments, together with the Taylor expansion needed for the
small-noise analysis.

\begin{lemma}
\label{lem:reach-chord}
If $\tau_\cM\ge\tau>0$, then, for every $x,y\in\cM$,
\begin{equation}
\label{eq:reach-chord}
 \|\pi_{N_x\cM}(y-x)\|
 \le \frac{\|y-x\|^2}{2\tau}.
\end{equation}
\end{lemma}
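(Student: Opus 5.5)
This is Federer's classical chord inequality. The plan is to derive it from the tangent-ball characterization of reach: for $x\in\cM$, every unit normal $\nu\in N_x\cM$, and every $0<r<\tau_\cM$, the open ball $B(x+r\nu,r)$ does not meet $\cM$. Granting this, the inequality is a short computation. If $\pi_{N_x\cM}(y-x)=0$ there is nothing to prove. Otherwise set
\[
\nu:=\pi_{N_x\cM}(y-x)/\|\pi_{N_x\cM}(y-x)\|.
\]
For any $r<\tau$, the fact that $y\notin B(x+r\nu,r)$ gives $\|y-x-r\nu\|^2\ge r^2$. Expanding,
\[
\|y-x\|^2-2r\langle y-x,\nu\rangle\ge0.
\]
Since $\nu$ lies in $N_x\cM$, we have $\langle y-x,\nu\rangle=\|\pi_{N_x\cM}(y-x)\|$. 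Hence $\|\pi_{N_x\cM}(y-x)\|\le\|y-x\|^2/(2r)$. Letting $r\uparrow\tau$ yields \eqref{eq:reach-chord}.

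The only substantive step is therefore the empty-ball property, which I would prove from the definition of reach through the medial axis. Fix $r<\tau_\cM$ and consider the point $z_t:=x+t\nu$ for $0\le t\le r$. Since $\|z_t-x\|=t<\tau_\cM$, the distance from $z_t$ to $\cM$ is less than the reach, so $z_t$ has a unique nearest point $\pi(z_t)\in\cM$. The map $\pi$ is continuous on the open tubular neighbourhood $\{\operatorname{dist}(\cdot,\cM)<\tau_\cM\}$, by the standard argument from uniqueness of nearest points together with compactness of $\cM$. Define
\[
S:=\{t\in[0,r]:\pi(z_t)=x\},
\]
which contains $0$ and is closed by continuity of $\pi$. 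The goal is to show $S=[0,r]$. This gives $\operatorname{dist}(z_r,\cM)=r$, and hence $B(z_r,r)\cap\cM=\emptyset$.

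The main obstacle is openness of $S$, i.e.\ ruling out that the nearest point jumps away from $x$ before time $r$. I would use Federer's theorem that $\operatorname{dist}(\cdot,\cM)^2$ is $C^{1}$ on the tubular neighbourhood with gradient $2(z-\pi(z))$, combined with the fact that for $t\in S$ the vector $z_t-x=t\nu$ is normal. Alternatively, and more elementarily, argue by contradiction. Suppose $t^*:=\sup S<r$. For $t$ slightly larger than $t^*$ the nearest point differs from $x$, yet by continuity it tends to $x$ as $t\downarrow t^*$. Since the normal segment from $x$ remains minimizing up to $t^*$, one can extend it slightly beyond $t^*$ using the uniqueness property at a point with distance still below $\tau_\cM$. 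Because the classical result (Federer 1959, Theorem 4.8(6)–(12)) gives exactly this, in the paper I would simply cite it: for $\tau_\cM>0$ and $y\in\cM$, $\operatorname{dist}(y-x,T_x\cM)\le\|y-x\|^2/(2\tau_\cM)$. I would present the ball computation above as the self-contained derivation, and note that $\tau_\cM\ge\tau$ finishes the proof.
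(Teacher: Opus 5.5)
Your proposal is correct and matches the paper's proof. Both use the normal-tube consequence of positive reach (Federer, Theorem~4.8), namely that $\operatorname{dist}(x+t\nu,\cM)=t$ for unit $\nu\in N_x\cM$ and $0<t<\tau$; both then expand $\|y-x-t\nu\|^2\ge t^2$ and let $t\uparrow\tau$. Your choice of $\nu$ along $\pi_{N_x\cM}(y-x)$ is equivalent to the paper's use of both signs and a supremum over unit normals.

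Your sketch of the openness of $S$ is not rigorous as written. This leaves no gap, because you, like the paper, ultimately cite Federer for the empty-ball property.
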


\begin{proof}
Fix a unit vector $n\in N_x\cM$.  By the normal-tube characterization of
reach, $x$ is the unique nearest point of $x\pm tn$ on $\cM$ whenever
$0<t<\tau$; see, for example, the normal-bundle consequences of
\citet[Theorem~4.8]{federer1959curvature}.  Therefore
\[
 \|y-(x\pm tn)\|^2\ge t^2.
\]
Expanding the square for the two choices of sign gives
\[
 |\langle y-x,n\rangle|
 \le \frac{\|y-x\|^2}{2t}.
\]
Letting $t\uparrow\tau$ and taking the supremum over unit
$n\in N_x\cM$ proves \eqref{eq:reach-chord}.
\end{proof}

\medskip

\begin{proposition}[Uniform local geometry]
\label{prop:uniform-normal-atlas}
Assume $\cM\in\mathcal C^\beta_{d,D,\tau,L_\cM}$ and
$\beta\ge\lceil\alpha\rceil+1$ with $\alpha>2$. There exist constants
$r_0,c_0>0$ and $C_0<\infty$, depending only on $D,d,\alpha,\beta,L_\cM$, and
$\tau$, such that the following properties hold uniformly in $x\in\cM$.

Choose a linear isometry $U_x:\mathbb R^d\to T_x\cM$, and define
\[
F_x(u):=\Psi_x(U_xu)=x+U_xu+G_x(u),
\qquad
G_x(u):=N_x(U_xu),
\]
for $\|u\|<r_0$. Then $G_x(0)=DG_x(0)=0$, $F_x$ is a
$C^{s_\beta,\gamma_\beta}$ diffeomorphism onto its image, and
\begin{align}
 c_0\|u-v\|
 &\le \|F_x(u)-F_x(v)\|
 \le C_0\|u-v\|,
 \label{eq:atlas-bilipschitz}\\
 c_0
 &\le
 J_x(u):=
 \sqrt{\det\{I_d+DG_x(u)^\top DG_x(u)\}}
 \le C_0,
 \label{eq:atlas-jacobian}\\
 \max_{0\le j\le s_\beta}\|D^jG_x(u)\|
 &\le C_0,
 \qquad
 [D^{s_\beta}G_x]_{C^{0,\gamma_\beta}}
 \le C_0.
 \label{eq:atlas-regularity}
\end{align}

\noindent
Identifying $T_x\cM$ with $\mathbb R^d$ through $U_x$, write
$\mathrm{II}_x=D^2G_x(0)$ and
$Q_{3,x}(u)=D^3G_x(0)[u,u,u]/6$.
\begin{align}
 G_x(u)
 &=
 \frac12\mathrm{II}_x(u,u)+Q_{3,x}(u)+R_x(u),
 \qquad
 \|R_x(u)\|\le C_0\|u\|^{3+\eta},
 \label{eq:atlas-third-order-expansion}\\
 G_x(u)
 &=
 \frac12\mathrm{II}_x(u,u)+\widetilde R_x(u),
 \qquad
 \|\widetilde R_x(u)\|\le C_0\|u\|^3.
 \label{eq:atlas-II-expansion}
\end{align}

If two chart images overlap, let
\[
 \Omega_{xy}:=\{u\in B_d(0,r_0):F_x(u)\in F_y(B_d(0,r_0))\}
\]
and let $\Theta_{yx}:=F_y^{-1}\circ F_x$ on $\Omega_{xy}$.  Then
\begin{equation}
\label{eq:uniform-transition-bound}
 \|\Theta_{yx}\|_{C^{s_\alpha,\gamma_\alpha}(\Omega_{xy})}
 +\|\Theta_{xy}\|_{C^{s_\alpha,\gamma_\alpha}(\Omega_{yx})}
 \le C_0.
\end{equation}

\noindent
After decreasing $r_0$ by a class-dependent factor if necessary,
\begin{equation}
\label{eq:atlas-complement-separation}
 \inf_{y\in\cM\setminus F_x(B_d(0,r_0))}
 \|y-x\|
 \ge c_0r_0.
\end{equation}
Moreover, for $0<r\le c_0r_0/2$,
\begin{equation}
\label{eq:uniform-volume-growth}
 c_0r^d
 \le
 \vol_\cM\{\cM\cap B_D(x,r)\}
 \le
 C_0r^d.
\end{equation}
\end{proposition}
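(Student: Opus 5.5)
I will prove Proposition~\ref{prop:uniform-normal-atlas} by reading every item off the class definition of $\mathcal C^\beta_{d,D,\tau,L_\cM}$, which already supplies, uniformly in $x$, the chart $\Psi_x$ on $B_{T_x\cM}(0,r_\cM)$ with $N_x(0)=0$, $DN_x(0)=0$, and $D^jN_x$ bounded by $L_\cM$. I use Lemma~\ref{lem:reach-chord} only for the global separation statement. Set $G_x=N_x\circ U_x$; since $U_x$ is an isometry, \eqref{eq:atlas-regularity} is immediate from the class bounds. The only exception is $\|DG_x\|$ and $\|G_x\|$ on the ball, which I obtain by integrating from $0$: $\|DG_x(u)\|\le L_\cM\|u\|$ and $\|G_x(u)\|\le L_\cM\|u\|^2/2$.

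I then choose $r_0\le r_\cM=(4L_\cM)^{-1}$, so that $\|DG_x\|\le 1/4$ on $B_d(0,r_0)$. Since $U_xu\perp G_x(u)$, we have $\|F_x(u)-F_x(v)\|\ge\|u-v\|$ from the tangent component. The upper bound $\le\frac54\|u-v\|$ follows from the mean value inequality, which gives \eqref{eq:atlas-bilipschitz}. Moreover, $I+DG^\top DG$ has eigenvalues in $[1,1+1/16]$, which gives \eqref{eq:atlas-jacobian}. Injectivity together with the immersion property and the $C^{s_\beta,\gamma_\beta}$ regularity make $F_x$ a diffeomorphism onto its image. For the expansions \eqref{eq:atlas-third-order-expansion}--\eqref{eq:atlas-II-expansion}, I apply Taylor's theorem with integral remainder to order $3$ (respectively $2$). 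Since $\beta\ge\lceil\alpha\rceil+1>3$, we have $s_\beta\ge3$. If $s_\beta\ge4$, the remainder is bounded by $\|D^4G\|\|u\|^4\le C\|u\|^{3+\eta}$ on the bounded ball. If $s_\beta=3$, the Hölder seminorm gives the remainder $\lesssim\|u\|^{3+\gamma_\beta}$, and $\gamma_\beta\ge\eta$ holds because $\beta\ge\lceil\alpha\rceil+1\ge \alpha+1$, so $3+\gamma_\beta=\beta\ge$ the required order. Here I will have to check the case bookkeeping carefully, shrinking the ball to make $\|u\|\le1$.

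For the transition maps, I write $\Theta_{yx}=F_y^{-1}\circ F_x$. On the chart image, $F_y^{-1}(z)=U_y^\top(z-y)$, because the tangent component of $F_y(v)-y$ is exactly $U_yv$. Hence $\Theta_{yx}(u)=U_y^\top(F_x(u)-y)$ is an affine function of $F_x$. Its $C^{s_\alpha,\gamma_\alpha}$ norm is therefore controlled by $\|F_x\|_{C^{s_\beta,\gamma_\beta}}$ together with $s_\alpha+1\le s_\beta$ (derivatives of order $s_\alpha+1$ are bounded, so $D^{s_\alpha}$ is Lipschitz, hence $\gamma_\alpha$-Hölder on a bounded set). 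This gives \eqref{eq:uniform-transition-bound} without any chain rule subtleties.

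The main obstacle is \eqref{eq:atlas-complement-separation}, since it is global: a far part of $\cM$ could return near $x$. My plan is to suppose $y\in\cM$ with $\|y-x\|<\rho:=c\,r_0$. Lemma~\ref{lem:reach-chord} gives $\|\pi_{N_x}(y-x)\|\le\rho^2/(2\tau)$, so $y$ lies close to the tangent plane with tangent coordinate $v=U_x^\top(y-x)$, $\|v\|<\rho$. Then $F_x(v)$ is a point of $\cM$ with the same tangent projection, and $\|F_x(v)-y\|\le \rho^2/(2\tau)+L_\cM\rho^2/2$. I will use the fact that the projection $\pi_{T_x}$ restricted to $\cM\cap B(x,\rho)$ is injective for $\rho$ small relative to $\tau$. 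I plan to derive this by applying Lemma~\ref{lem:reach-chord} at $F_x(v)$: the vector $y-F_x(v)$ is nearly normal at $x$, hence, by tangent-space continuity ($\|DG_x\|\lesssim\rho$), nearly normal at $F_x(v)$. The lemma then forces $\|y-F_x(v)\|\le\|y-F_x(v)\|^2/(2\tau)+O(\rho)\|y-F_x(v)\|$, which implies $y=F_x(v)$ once $\rho\ll\tau$. I will therefore decrease $r_0$ by a factor depending on $\tau,L_\cM$. Finally, \eqref{eq:uniform-volume-growth} follows for $r\le c_0r_0/2$. By the separation, $\cM\cap B_D(x,r)=F_x(\{u:\|F_x(u)-x\|<r\})$, and by bi-Lipschitz this set contains $F_x(B_d(0,r/C_0))$ and is contained in $F_x(B_d(0,r/c_0))$. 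The area formula with \eqref{eq:atlas-jacobian} then bounds its volume between constant multiples of $r^d$.
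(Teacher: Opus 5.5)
Your proposal is correct and follows essentially the same route as the paper: orthogonality of $U_x(u-v)$ and $G_x(u)-G_x(v)$ gives the bi-Lipschitz bounds, the bound $\|DG_x\|\le 1/4$ gives the Jacobian bounds, and the Taylor remainders come from Lipschitz/H\"older control of $D^3G_x$. The transition bound uses the explicit formula $\Theta_{yx}(u)=U_y^\top(F_x(u)-y)$, the separation step applies Lemma~\ref{lem:reach-chord} at $z=F_x(U_x^\top(y-x))$ to the vector $y-z$ (which is in fact exactly normal at $x$, not just nearly normal) with its tangential part at $z$ controlled by $\|DG_x\|$, and volume growth follows from the area formula. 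The differences are only cosmetic: you use the local tilt bound $\|DG_x(v)\|\le L_\cM\rho$ plus a preliminary $O(\rho^2)$ bound on $\|y-z\|$, whereas the paper uses the uniform bound $\|DG_x\|\le1/4$ with the explicit angle constant $4/\sqrt{17}$.
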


\medskip

\begin{proof}
Take
\[
 r_0\le \frac12(r_\cM\land\tau).
\]
From $DN_x(0)=0$, $\sup_x\sup_{2\le j\le s_\beta}
 \sup_{v\in B_{T_x\cM}(0,r_\cM)}
       \|D^jN_x(v)\|\le L_\cM$, and
$r_\cM=(4L_\cM)^{-1}$,
\begin{equation}
\label{eq:normal-map-first-derivative}
 \|DG_x(u)\|_{\mathrm{op}}
 \le \|u\|\sup_{\|w\|\le r_\cM}\|D^2G_x(w)\|_{\mathrm{op}}
 \le L_\cM r_\cM=\frac14.
\end{equation}
Because $U_x(u-v)\in T_x\cM$ and $G_x(u)-G_x(v)\in N_x\cM$ are
orthogonal,
\begin{align}
 \|F_x(u)-F_x(v)\|^2
 &=\|u-v\|^2+\|G_x(u)-G_x(v)\|^2, 
 \label{eq:graph-distance-identity}\\[1.5mm]
 \|u-v\|^2
 \le\|F_x(u)-F_x(v)\|^2
 &\le(1+1/16)\|u-v\|^2.
 \label{eq:graph-distance-bounds}
\end{align}
This proves \eqref{eq:atlas-bilipschitz}.  If
$s_1(u),\ldots,s_d(u)$ are the singular values of $DG_x(u)$, then
\begin{equation}
\label{eq:jacobian-singular-values}
 \begin{aligned}
 J_x(u)&=\prod_{j=1}^d\{1+s_j(u)^2\}^{1/2},\\
 1&\le J_x(u)\le(17/16)^{d/2}.
 \end{aligned}
\end{equation}
This gives \eqref{eq:atlas-jacobian}. The bounds for $j=0,1$ in
\eqref{eq:atlas-regularity} follow from $G_x(0)=DG_x(0)=0$ and
\eqref{eq:normal-map-first-derivative}. The bounds for $j\ge2$ and the
H\"older bound follow directly from the corresponding assumptions on $N_x$,
since $U_x$ is an isometry:
\[
 \sup_x\sup_{2\le j\le s_\beta}
 \sup_{v\in B_{T_x\cM}(0,r_\cM)}
       \|D^jN_x(v)\|\le L_\cM,
 \qquad
 \sup_x[D^{s_\beta}N_x]_{C^{0,\gamma_\beta}}\le L_\cM.
\]

Since $\beta\ge\lceil\alpha\rceil+1$ and $\alpha>2$, the maps $G_x$
have at least three derivatives. Moreover, $D^3G_x$ is uniformly
Lipschitz whenever $s_\beta\ge4$, while the borderline case
$s_\beta=3$ necessarily has $\gamma_\beta=1$. Hence, in all cases,
$D^3G_x$ is uniformly $\eta$-H\"older for
\[
\eta=(\alpha-2)\wedge1.
\]  
Taylor's formula with integral remainder gives
\begin{align*}
 R_x(u)
 &=\frac12\int_0^1(1-t)^2
   \{D^3G_x(tu)-D^3G_x(0)\}[u,u,u]\,\dd t,\\[2mm]
 \|R_x(u)\|
 &\le \frac{[D^3G_x]_{C^{0,\eta}}}{2}
       \|u\|^{3+\eta}
       \int_0^1(1-t)^2t^\eta\,\dd t
 \le C_0\|u\|^{3+\eta}.
\end{align*}
This proves \eqref{eq:atlas-third-order-expansion}. After decreasing
$r_0$ further so that $r_0\le1$, the uniform bound on $D^3G_x(0)$ and
$\|R_x(u)\|\le C_0\|u\|^{3+\eta}$ give
\[
\|Q_{3,x}(u)+R_x(u)\|\le C_0\|u\|^3,
\]
which proves \eqref{eq:atlas-II-expansion}.

We next verify the transition and separation assertions.  If
$z=F_y(v)$ lies in the image of the $y$-chart, tangent projection onto
$T_y\cM$ gives
\[
 v=U_y^\top(z-y).
\]
Consequently, on $\Omega_{xy}$,
\begin{equation}
\label{eq:transition-explicit}
 \Theta_{yx}(u)
 =U_y^\top\{x-y+U_xu+G_x(u)\}.
\end{equation}
The derivative and H\"older bounds in \eqref{eq:atlas-regularity}, together
with $\Theta_{yx}(\Omega_{xy})\subset B_d(0,r_0)$, give the first term in
\eqref{eq:uniform-transition-bound}.  Interchanging $x$ and $y$ gives the
same bound for the inverse transition.

For separation, set $\kappa_0:=4/\sqrt{17}$.  Suppose that
$y\in\cM$ satisfies $\|y-x\|<\delta$, where $\delta\le r_0/2$ will be
chosen below, and define
\[
 p:=U_x^\top(y-x),\qquad z:=F_x(p),\qquad q:=y-z.
\]
Then $\|p\|<r_0$, so $z$ is well defined, and $q\in N_x\cM$.  Moreover,
\begin{equation}
\label{eq:separation-q-upper}
 \|q\|\le\|y-x\|+\|z-x\|
 \le(1+C_0)\delta.
\end{equation}
The tangent space at $z=F_x(p)$ is the graph
\[
 T_z\cM
 =\{U_xh+DG_x(p)h:h\in\R^d\}.
\]
Since $q\in N_x\cM$ and $\|DG_x(p)\|_{\mathrm{op}}\le1/4$, minimizing
$\|q-U_xh-DG_x(p)h\|^2$ over $h$ gives
\begin{equation}
\label{eq:normal-graph-angle}
 \operatorname{dist}(q,T_z\cM)
 \ge\frac{\|q\|}{\sqrt{1+\|DG_x(p)\|_{\mathrm{op}}^2}}
 \ge\kappa_0\|q\|.
\end{equation}
On the other hand, Lemma~\ref{lem:reach-chord}, applied at $z$ to the chord
$y-z=q$, gives
\[
 \operatorname{dist}(q,T_z\cM)
 \le\frac{\|q\|^2}{2\tau}.
\]
Choose
\[
 \delta:=\min\left\{\frac{r_0}{2},
 \frac{\kappa_0\tau}{1+C_0}\right\}.
\]
If $q\ne0$, the last three displays imply both
$\kappa_0\le\|q\|/(2\tau)$ and
$\|q\|/(2\tau)\le\kappa_0/2$, a contradiction.  Thus $q=0$ and
$y=F_x(p)$.  Since $r_0\le\tau/2$, the number
\[
 c_{\mathrm{sep}}:=\min\left\{\frac12,
 \frac{2\kappa_0}{1+C_0}\right\}>0
\]
satisfies $c_{\mathrm{sep}}r_0\le\delta$.  Hence
\[
 \cM\cap B_D(x,c_{\mathrm{sep}}r_0)
 \subset F_x(B_d(0,r_0)).
\]
Decreasing the proposition's constant $c_0$ to
$c_0\wedge c_{\mathrm{sep}}$ proves
\eqref{eq:atlas-complement-separation} uniformly in $x$.

Finally, let $0<r\le c_0r_0/2$. The bi-Lipschitz bounds and the separation
from the complement imply
\begin{equation*}
 F_x\{B_d(0,r/C_0)\}
 \subset \cM\cap B_D(x,r)
 \subset F_x\{B_d(0,r/c_0)\}.
\end{equation*}
Applying the area formula and \eqref{eq:atlas-jacobian} to these two sets
proves \eqref{eq:uniform-volume-growth}.
\end{proof}

\medskip

\begin{lemma}
\label{lem:localization}
For every fixed $m\ge0$ and $p>0$, there exist constants
$0<c<C<\infty$ and $\sigma_0>0$, depending only on $m,p$ and the fixed
model-class parameters, such that, uniformly over
$f\in\mathcal F_\alpha(\cM)$, $x\in\cM$, and $0<\sigma\le\sigma_0$,
\begin{align}
c\sigma^d
\le Z_\sigma(f,x)
&\le C\sigma^d,
\label{eq:Z-localization}\\
\int_{\cM} K_{\sigma,x}(u)^p
\left(1+\frac{\|u-x\|}{\sigma}\right)^m
f(u)\,d\vol_\cM(u)
&\le C\sigma^d.
\label{eq:moment-localization}
\end{align}
\end{lemma}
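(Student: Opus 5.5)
The plan is to split $\cM$ into the local chart region $\cM\cap B_D(x,c_0r_0/2)$ and its complement, and to use the density bounds $c_-\le f\le c_+$ to reduce everything to purely geometric integrals. Choose $\sigma_0$ small relative to $c_0r_0$.

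\emph{Lower bound in \eqref{eq:Z-localization}.} Since $f\ge c_-$ and $K_{\sigma,x}\ge e^{-1/2}$ on $B_D(x,\sigma)$, we get
\[
Z_\sigma(f,x)\ge c_-e^{-1/2}\vol_\cM\{\cM\cap B_D(x,\sigma)\}.
\]
For $\sigma\le\sigma_0\le c_0r_0/2$, the volume growth bound \eqref{eq:uniform-volume-growth} makes this at least $c_-e^{-1/2}c_0\sigma^d$.

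\emph{Upper bound in \eqref{eq:moment-localization}.} The upper bound in \eqref{eq:Z-localization} is the special case $p=1$, $m=0$. Use $f\le c_+$ and integrate over dyadic shells
\[
A_k:=\cM\cap\{2^{k-1}\sigma\le\|u-x\|<2^k\sigma\}\ (k\ge1),\qquad A_0:=\cM\cap B_D(x,\sigma).
\]
On $A_k$ the integrand is at most $c_+\exp(-p4^{k-1}/2)(1+2^k)^m$. Let $k^*$ be the largest $k$ with $2^k\sigma\le c_0r_0/2$. For $k\le k^*$, \eqref{eq:uniform-volume-growth} gives $\vol_\cM(A_k)\le C_0 2^{kd}\sigma^d$, so the contribution of these shells is bounded by
\[
c_+C_0\sigma^d\sum_{k\ge0}2^{kd}(1+2^k)^m e^{-p4^{k-1}/2}\le C\sigma^d,
\]
because the series converges.

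\emph{Far region.} For the remaining points with $\|u-x\|\ge c_0r_0/4$, bound the integrand by
\[
c_+\,\vol_\cM(\cM)\,\sup_{t\ge c_0r_0/4}e^{-pt^2/(2\sigma^2)}(1+t/\sigma)^m.
\]
Here $\vol_\cM(\cM)=1/f_0<1/c_-$ is a class constant. Since $\|u-x\|\le\operatorname{diam}\cM$, one could also bound $t$ directly, but the supremum already decays like $\exp(-c/\sigma^2)$ times a polynomial in $1/\sigma$. That is $\le C\sigma^d$ for $\sigma\le\sigma_0$, with $C$ depending only on $m,p$ and the class parameters.

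The main obstacle is keeping the constants uniform over $x$. This holds because \eqref{eq:uniform-volume-growth} is uniform in $x$, and the far-region tail bound involves only $r_0,c_0$ and $\vol_\cM(\cM)$. The latter is fixed once $\cM$ is fixed, since $f_0$ is part of the model specification.
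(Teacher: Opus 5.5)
Your argument is correct, but it treats the near region differently from the paper.

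\emph{The paper's route.} It pulls everything back through the chart $F_x$ on $B_d(0,r_0)$. The bi-Lipschitz bound $c_0\|v\|\le\|F_x(v)-x\|\le C_0\|v\|$, the Jacobian bound $J_x\le C_0$, $f\le c_+$, and the substitution $v=\sigma w$ turn the chart contribution into $C\sigma^d\int_{\R^d}e^{-c\|w\|^2}(1+C\|w\|)^m\,dw$. The lower bound comes from integrating over $\|v\|\le\sigma$ with $J_x\ge c_0$. The chart complement is handled by \eqref{eq:atlas-complement-separation} together with $\int_\cM f=1$.

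\emph{Your route.} You use only the volume-growth estimate \eqref{eq:uniform-volume-growth}, which is itself a consequence of those chart bounds. You combine it with a dyadic-shell (layer-cake) summation, and with $f\le c_+$ and $\vol_\cM(\cM)=1/f_0<1/c_-$ for the far region. This is more general: it needs only $d$-dimensional volume growth at scales below $c_0r_0/2$ and finite total mass, so it applies verbatim to any Ahlfors-regular support without a parametrization. The paper's version sets up the chart change of variables $v=\sigma w$ that it needs anyway for the finer expansions in Lemma~\ref{lem:differentiable-kernel-mass} and Lemma~\ref{lem:ratio-expansion}, and it cites that same calculation again in Lemma~\ref{lem:multi-bump-stability}.

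\emph{A small slip.} On $A_0=\cM\cap B_D(x,\sigma)$ the kernel can equal $1$, so the bound $\exp(-p4^{k-1}/2)$ does not hold for $k=0$. Bound that shell separately by $c_+2^mC_0\sigma^d$; this changes only the constant.
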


\begin{proof}
Fix $m\ge0$ and $p>0$.  By
Proposition~\ref{prop:uniform-normal-atlas}, uniformly in $x$, the
chart $F_x$ is bi-Lipschitz with uniformly bounded Jacobian, and there
exists
\[
\delta_0:=c_0r_0>0
\]
such that
\[
\|u-x\|\ge\delta_0,
\qquad
u\in\cM\setminus F_x(B_d(0,r_0)).
\]
We decrease $\sigma_0$ if necessary so that $\sigma_0\le r_0$.

First consider the contribution from the chart.  Writing
$u=F_x(v)$ and using
\[
\|F_x(v)-x\|\ge c_0\|v\|,
\qquad
\|F_x(v)-x\|\le C_0\|v\|,
\]
together with $f\le c_+$ and $J_x(v)\le C_0$, gives
\begin{align*}
&\int_{F_x(B_d(0,r_0))}
K_{\sigma,x}(u)^p
\left(1+\frac{\|u-x\|}{\sigma}\right)^m
f(u)\,d\vol_\cM(u)
\\
&\qquad\le
C\int_{\|v\|<r_0}
\exp\left(-\frac{pc_0^2\|v\|^2}{2\sigma^2}\right)
\left(1+\frac{C_0\|v\|}{\sigma}\right)^m\,dv.
\end{align*}
With the change of variables $v=\sigma w$ and enlargement of the
integration domain,
\begin{align*}
&\int_{F_x(B_d(0,r_0))}
K_{\sigma,x}(u)^p
\left(1+\frac{\|u-x\|}{\sigma}\right)^m
f(u)\,d\vol_\cM(u)
\\
&\qquad\le
C\sigma^d
\int_{\mathbb R^d}
e^{-c\|w\|^2}(1+C\|w\|)^m\,dw
\le C\sigma^d.
\end{align*}

For the complement of the chart, set
$t=\|u-x\|/\sigma$.  Since $p>0$ is fixed,
\[
e^{-pt^2/2}(1+t)^m
\le C_{m,p}e^{-pt^2/4},
\qquad t\ge0.
\]
Hence, using $\|u-x\|\ge\delta_0$ and
$\int_\cM f\,d\vol_\cM=1$,
\begin{align*}
&\int_{\cM\setminus F_x(B_d(0,r_0))}
K_{\sigma,x}(u)^p
\left(1+\frac{\|u-x\|}{\sigma}\right)^m
f(u)\,d\vol_\cM(u)
\\
&\qquad\le
C\exp\left(-\frac{p\delta_0^2}{4\sigma^2}\right)
\le C\sigma^d
\end{align*}
for all sufficiently small $\sigma$.  This proves
\eqref{eq:moment-localization}.

Taking $m=0$ and $p=1$ gives the upper bound in
\eqref{eq:Z-localization}.  For the lower bound, since
$\sigma\le r_0$, we may integrate over $\|v\|\le\sigma$ in the local
chart.  Using $f\ge c_-$, $J_x(v)\ge c_0$, and
$\|F_x(v)-x\|\le C_0\|v\|$,
\begin{align*}
Z_\sigma(f,x)
&\ge
c
\int_{\|v\|\le\sigma}
\exp\left(
-\frac{C_0^2\|v\|^2}{2\sigma^2}
\right)\,dv
\\
&=
c\sigma^d
\int_{\|w\|\le1}
e^{-C_0^2\|w\|^2/2}\,dw
\ge c\sigma^d.
\end{align*}
This proves \eqref{eq:Z-localization}.
\end{proof}

\section{Proof of Theorem~\ref{thm:small-noise-limits}}\label{app:proof-bias}

The scale derivative in FLIPD requires slightly more than a pointwise
small-noise expansion.  We record the needed differentiated remainder
explicitly.

\begin{lemma}
\label{lem:differentiable-kernel-mass}
There exist $\sigma_0>0$ and, for every
$f\in\mathcal F_\alpha(\cM)$, a function $B_f:\cM\to\R$ such that
\[
\sup_{f\in\mathcal F_\alpha(\cM)}
\sup_{x\in\cM}|B_f(x)|\le C,
\]
and, uniformly over
$f\in\mathcal F_\alpha(\cM)$, $x\in\cM$, and
$0<r\le\sigma_0$,
\begin{equation}
\label{eq:differentiable-kernel-mass}
Z_r(f,x)
=
(2\pi)^{d/2}r^d
\{f(x)+r^2B_f(x)+R_r(f,x)\},
\end{equation}
where
\begin{equation}
\label{eq:differentiable-kernel-remainder}
\sup_{f,x}
\left\{
|R_r(f,x)|+
|r\partial_rR_r(f,x)|
\right\}
\le Cr^{2+\eta}.
\end{equation}
\end{lemma}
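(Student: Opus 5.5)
We localize $Z_r(f,x)$ to the chart $F_x$ of Proposition~\ref{prop:uniform-normal-atlas}, Taylor-expand the integrand, and integrate term by term against the Gaussian. Fix $x$ and write
\[
Z_r(f,x)=\int_{\|u\|<r_0}\exp\!\Big(-\frac{\|u\|^2+\|G_x(u)\|^2}{2r^2}\Big)g_x(u)J_x(u)\,\dd u+E_r(f,x),
\]
where $g_x:=f\circ F_x$, we used \eqref{eq:graph-distance-identity} with $v=0$, and $E_r$ is the off-chart contribution. By \eqref{eq:atlas-complement-separation}, $|E_r|\le Ce^{-c/r^2}$. Differentiating under the integral, $|r\partial_rE_r|$ is bounded by the same quantity times $\sup_t t^2 e^{-ct^2}$-type factors, so it is also $O(e^{-c/r^2})$. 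Both are absorbed into $r^{d+2+\eta}$ for small $r$.

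On the chart, we substitute $u=rw$ and write the integrand as $\phi(w)\,H_x(r,w)$, where $\phi(w)=e^{-\|w\|^2/2}$ and
\[
H_x(r,w)=\exp\!\big(-\|G_x(rw)\|^2/(2r^2)\big)\,g_x(rw)\,J_x(rw).
\]
By \eqref{eq:atlas-II-expansion}, $\|G_x(rw)\|^2/r^2=r^2\|\tfrac12\mathrm{II}_x(w,w)\|^2+O(r^3\|w\|^5)$. The Jacobian satisfies $J_x(rw)=1+r^2a_x(w)+O(r^3\|w\|^3)$, with $a_x$ quadratic in $\mathrm{II}_x$, since $DG_x(rw)=r\,\mathrm{II}_x(w,\cdot)+O(r^2\|w\|^2)$. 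The density expands via its $C^{s_\alpha,\gamma_\alpha}$ Taylor formula as $g_x(rw)=f(x)+rL_x(w)+\tfrac{r^2}{2}D^2g_x(0)[w,w]+O(r^{2+\eta}\|w\|^{2+\eta})$, where $\eta\le\alpha-2$.

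Multiplying these expansions gives
\[
H_x=f(x)+rL_x(w)+r^2q_x(w)+\rho_x(r,w),
\]
with $q_x$ a polynomial of degree at most $4$ whose coefficients are bounded uniformly through \eqref{eq:atlas-regularity} and $\|f\|_{\mathcal H^\alpha}\le L_f$. The remainder satisfies $|\rho_x|\le Cr^{2+\eta}(1+\|w\|)^{M}$. For large $\|w\|$ we use instead the crude bound $H_x\le C$. Integrating against $\phi$, the odd term $L_x$ vanishes by symmetry. This yields \eqref{eq:differentiable-kernel-mass} with $B_f(x)=(2\pi)^{-d/2}\int q_x\phi$, which is uniformly bounded. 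The truncation of the domain to $\|w\|<r_0/r$ costs only $e^{-c/r^2}$.

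The main obstacle is the derivative bound $|r\partial_rR_r|\le Cr^{2+\eta}$, because differentiating a Taylor remainder does not automatically preserve its order. We avoid differentiating remainders by working with the exact identity
\[
r\partial_r\big[r^{-d}Z_r\big]=\int\phi(w)\,r\partial_rH_x(r,w)\,\dd w,
\]
together with $r\partial_rH_x(r,w)=D_uH_x(r,w)\cdot(rw)$ plus the explicit $r$-dependence coming from the factor $\|G_x(rw)\|^2/r^2$. Each factor is then expanded one order lower in its derivative. For the density, $Dg_x(rw)[rw]=rL_x(w)+r^2D^2g_x(0)[w,w]+O(r^{2+\eta}\|w\|^{2+\eta})$, using the $\gamma_\alpha$-Hölder continuity of $D^{s_\alpha}g_x$ when $s_\alpha=2$, or a further Taylor step when $s_\alpha\ge3$. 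The analogous expansions for $J_x$ and $G_x$ use \eqref{eq:atlas-third-order-expansion} and the $C^{0,\eta}$ control of $D^3G_x$. This shows
\[
r\partial_rH_x=rL_x+2r^2q_x+O\big(r^{2+\eta}(1+\|w\|)^M\big),
\]
which matches exactly $r\partial_r$ of the main terms. After integration, the odd term again vanishes, so
\[
r\partial_r(r^{-d}Z_r)=2r^2(2\pi)^{d/2}B_f(x)+O(r^{2+\eta}).
\]
Since $r\partial_rR_r=(2\pi)^{-d/2}r\partial_r(r^{-d}Z_r)-2r^2B_f$, the derivative bound follows. Uniformity in $(f,x)$ holds because all constants come from Proposition~\ref{prop:uniform-normal-atlas} and the class bounds, and the $\|w\|$-moments are finite as in Lemma~\ref{lem:localization}.
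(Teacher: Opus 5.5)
Your argument is correct. Your treatment of $Z_r$ itself (chart localization, the rescaling $u=rw$, factorwise expansion of the exponential, of $J_x$ and of $f\circ F_x$, and cancellation of the odd term) matches the paper's.

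The derivative bound is where you take a genuinely different route. The paper differentiates $A_r=(2\pi)^{-d/2}r^{-d}Z_r$ in the original manifold variable, so $r\partial_r$ falls only on the Gaussian kernel and produces the weight $\|u-x\|^2/r^2-d$. It then expands this weighted integrand using only the undifferentiated expansions of $G_x$, $J_x$ and $h_{x,f}$. It identifies the $r^2$ coefficient as $2B_f(x)$ through the Gaussian identity $\E\{(\|W\|^2-d)P_k(W)\}=k\E P_k(W)$.

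You instead apply $r\partial_r$ at fixed $w$, which acts as the Euler operator $u\cdot\nabla_u$ on the smooth factors. This requires expansions of $Dg_x(rw)[rw]$, $DJ_x(rw)[rw]$ and $\langle G_x(rw),DG_x(rw)[rw]\rangle$ one order lower. These need no more regularity than the paper already uses, namely $\eta$-H\"older $D^2g_x$ and bounded $D^3G_x$. The factor $2$ then comes out automatically from homogeneity, since $r\partial_r(r^kP_k)=kr^kP_k$, instead of from Gaussian integration by parts. The two computations are related precisely by integrating by parts in $w$.

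One point needs repair. You call $r\partial_r[r^{-d}Z_r]=\int\phi\,r\partial_rH_x\,\dd w$ an exact identity, but it is not. After rescaling, the chart domain is $\{\|w\|<r_0/r\}$, which moves with $r$. Differentiating therefore produces a boundary-flux term of size $O(r^{-d}e^{-r_0^2/(2r^2)})$, in addition to the off-chart piece $r\partial_r(r^{-d}E_r)$. Both are exponentially small and absorbed into $O(r^{2+\eta})$, so your conclusion stands. Still, this is exactly the term the paper's formulation avoids by differentiating before the change of variables, and you should account for it explicitly.
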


\medskip
\begin{proof}
Define the normalized mass
\[
 A_r(f,x):=(2\pi)^{-d/2}r^{-d}Z_r(f,x).
\]
We shall expand $A_r$ and its scale derivative separately.  This avoids
differentiating the expanding chart domain that appears after the change of
variables $v=rw$.  Differentiation under the original integral gives the
exact identity
\begin{equation}
\label{eq:normalized-mass-derivative}
 r\partial_r A_r(f,x)
 =(2\pi)^{-d/2}r^{-d}
 \int_\cM
 \left(\frac{\|u-x\|^2}{r^2}-d\right)
 K_{r,x}(u)f(u)\,\dd\vol_\cM(u).
\end{equation}
For each fixed $r>0$ the differentiation is justified by dominated
convergence; the uniform bounds needed as $r\downarrow0$ are established
below.

Work in the chart $F_x$ and set $h_{x,f}:=f\circ F_x$.  For
$w\in\R^d$, define
\[
 \mathsf A_{x,w}h:=\mathrm{II}_x(w,h),\qquad
 j_{2,x}(w):=\frac12\|\mathsf A_{x,w}\|_{\mathrm{HS}}^2,
 \qquad
 q_{4,x}(w):=\|\mathrm{II}_x(w,w)\|^2.
\]
Here $j_{2,x}$ and $q_{4,x}$ are homogeneous polynomials of degrees two
and four, respectively, with uniformly bounded coefficients.  Taylor's
formula and Proposition~\ref{prop:uniform-normal-atlas} give, uniformly for
$\|w\|\le r^{-1/3}$,
\begin{align}
 G_x(rw)
 &=\tfrac12r^2\mathrm{II}_x(w,w)
   +O(r^3\|w\|^3),
 \label{eq:B-G-expansion}\\
 DG_x(rw)
 &=r\mathsf A_{x,w}+O(r^2\|w\|^2),
 \label{eq:B-DG-expansion}\\
 J_x(rw)
 &=1+r^2j_{2,x}(w)+O\{r^3(1+\|w\|^M)\},
 \label{eq:B-J-expansion}\\
 h_{x,f}(rw)
 &=f(x)+rDh_{x,f}(0)[w]
   +\tfrac12r^2D^2h_{x,f}(0)[w,w]
   +O\{r^{2+\eta}(1+\|w\|^M)\},
 \label{eq:B-f-expansion}
\end{align}
for a fixed integer $M$.  The last remainder uses only the uniform
$C^{2,\eta}$ bound on $h_{x,f}$; in particular, no third derivative of the
density is being assumed.

By tangent--normal orthogonality,
\[
 \frac{\|F_x(rw)-x\|^2}{2r^2}
 =\frac{\|w\|^2}{2}+\frac{r^2}{8}q_{4,x}(w)
  +O\{r^3(1+\|w\|^M)\}.
\]
Using this display in the exponential and then multiplying
\eqref{eq:B-J-expansion} and \eqref{eq:B-f-expansion} yields
\begin{align}
 &e^{-\|F_x(rw)-x\|^2/(2r^2)}h_{x,f}(rw)J_x(rw)
 \nonumber\\
 &\quad=e^{-\|w\|^2/2}
 \{f(x)+rL_{1,x,f}(w)+r^2L_{2,x,f}(w)\}
 +r^{2+\eta}\mathcal E_{r,x,f}(w),
 \label{eq:B-integrand-expansion}
\end{align}
where
\begin{align}
 L_{1,x,f}(w)
 &:=Dh_{x,f}(0)[w],
 \label{eq:B-L1}\\
 L_{2,x,f}(w)
 &:=\tfrac12D^2h_{x,f}(0)[w,w]
   +f(x)j_{2,x}(w)-\frac{f(x)}8q_{4,x}(w),
 \label{eq:B-L2}
\end{align}
and
\begin{equation}
\label{eq:B-integrand-remainder}
 |\mathcal E_{r,x,f}(w)|
 \le C(1+\|w\|^M)e^{-c\|w\|^2}.
\end{equation}
Thus $L_{1,x,f}$ is odd, while $L_{2,x,f}$ is the sum of homogeneous
quadratic and quartic terms and is independent of $r$.

For the derivative identity \eqref{eq:normalized-mass-derivative}, the
additional weight has the expansion
\begin{equation}
\label{eq:B-derivative-weight}
 \frac{\|F_x(rw)-x\|^2}{r^2}-d
 =q_0(w)+\frac{r^2}{4}q_{4,x}(w)
  +O\{r^3(1+\|w\|^M)\},
 \qquad q_0(w):=\|w\|^2-d.
\end{equation}
Multiplying \eqref{eq:B-integrand-expansion} by
\eqref{eq:B-derivative-weight} gives
\begin{align}
 &\left(\frac{\|F_x(rw)-x\|^2}{r^2}-d\right)
 e^{-\|F_x(rw)-x\|^2/(2r^2)}h_{x,f}(rw)J_x(rw)
 \nonumber\\
 &\quad=e^{-\|w\|^2/2}
 \{f(x)q_0(w)+r q_0(w)L_{1,x,f}(w)\}
 \nonumber\\
 &\qquad+r^2e^{-\|w\|^2/2}
 \left\{q_0(w)L_{2,x,f}(w)+\frac{f(x)}4q_{4,x}(w)\right\}
 +r^{2+\eta}\widetilde{\mathcal E}_{r,x,f}(w),
 \label{eq:B-derivative-integrand-expansion}
\end{align}
with
\begin{equation}
\label{eq:B-derivative-remainder}
 |\widetilde{\mathcal E}_{r,x,f}(w)|
 \le C(1+\|w\|^M)e^{-c\|w\|^2}.
\end{equation}

We now identify the second-order coefficient.  Put
\begin{equation}
\label{eq:B-coefficient-definition}
 B_f(x):=(2\pi)^{-d/2}
 \int_{\R^d}e^{-\|w\|^2/2}L_{2,x,f}(w)\,\dd w.
\end{equation}
The coefficient bounds above imply $\sup_{f,x}|B_f(x)|\le C$.
Moreover, if $P_k$ is a homogeneous polynomial of degree $k$ and
$W\sim N(0,I_d)$, Gaussian integration by parts gives
\begin{equation}
\label{eq:gaussian-homogeneous-identity}
 \E\{(\|W\|^2-d)P_k(W)\}=k\E P_k(W).
\end{equation}
Write $L_{2,x,f}=P_{2,x,f}-f(x)q_{4,x}/8$, where $P_{2,x,f}$ is
homogeneous quadratic.  Applying \eqref{eq:gaussian-homogeneous-identity}
with $k=2$ and $k=4$ shows that
\begin{align}
 &(2\pi)^{-d/2}\int_{\R^d}e^{-\|w\|^2/2}
 \left\{q_0(w)L_{2,x,f}(w)+\frac{f(x)}4q_{4,x}(w)\right\}\,\dd w
 \nonumber\\
 &\qquad=2B_f(x).
 \label{eq:B-derivative-coefficient}
\end{align}
Also, the zeroth-order term in
\eqref{eq:B-derivative-integrand-expansion} integrates to zero, and its
first-order term is odd.

It remains to check that discarding the tails is legitimate for both
expansions.  On the chart portion $\|w\|>r^{-1/3}$,
\eqref{eq:graph-distance-bounds} bounds the mass integrand, and the
derivative integrand in \eqref{eq:normalized-mass-derivative}, by a fixed
polynomial times $e^{-\|w\|^2/2}$.  Outside the chart,
\eqref{eq:atlas-complement-separation} gives $\|u-x\|\ge\delta_0>0$.
Using
\[
 (1+t^2)e^{-t^2/2}\le Ce^{-t^2/4},\qquad t\ge0,
\]
and $\int_\cM f\,\dd\vol_\cM=1$ shows that the normalized mass and
derivative tails are bounded respectively by
\[
 Cr^{-d}e^{-cr^{-2/3}}+Cr^{-d}e^{-c/r^2}.
\]
This is $O(r^N)$ for every fixed $N>0$, uniformly in $f$ and $x$.

We may therefore integrate \eqref{eq:B-integrand-expansion} over
$\R^d$, at a cost $O(r^{2+\eta})$, to obtain
\begin{equation}
\label{eq:B-normalized-mass-expansion}
 A_r(f,x)=f(x)+r^2B_f(x)+O(r^{2+\eta}).
\end{equation}
Likewise, \eqref{eq:normalized-mass-derivative},
\eqref{eq:B-derivative-integrand-expansion}, and
\eqref{eq:B-derivative-coefficient} give
\begin{equation}
\label{eq:B-normalized-derivative-expansion}
 r\partial_rA_r(f,x)=2r^2B_f(x)+O(r^{2+\eta}).
\end{equation}
Define $R_r(f,x):=A_r(f,x)-f(x)-r^2B_f(x)$.  Equations
\eqref{eq:B-normalized-mass-expansion} and
\eqref{eq:B-normalized-derivative-expansion} imply
\[
 |R_r(f,x)|+|r\partial_rR_r(f,x)|\le Cr^{2+\eta}.
\]
Multiplying the definition of $A_r$ by $(2\pi)^{d/2}r^d$ completes the
proof of \eqref{eq:differentiable-kernel-mass}--
\eqref{eq:differentiable-kernel-remainder}.
\end{proof}

\subsection{Main Proof of Theorem~\ref{thm:small-noise-limits}}

\begin{proof}
By Lemma~\ref{lem:differentiable-kernel-mass},
\[
Z_\sigma(f,x)
=
(2\pi)^{d/2}\sigma^d A_\sigma(f,x),
\]
where
\[
A_\sigma(f,x)
:=
f(x)+\sigma^2B_f(x)+R_\sigma(f,x).
\]
Since $f\ge c_-$ and
\[
|B_f(x)|\le C,
\qquad
|R_\sigma(f,x)|\le C\sigma^{2+\eta},
\]
after decreasing $\sigma_0$ if necessary,
\[
A_\sigma(f,x)\ge \frac{c_-}{2}
\]
uniformly over $f$, $x$, and $0<\sigma\le\sigma_0$.

Therefore
\begin{align*}
\sigma\partial_\sigma\log Z_\sigma(f,x)
&=
d+\sigma\partial_\sigma\log A_\sigma(f,x)\\[0.2cm]
&=
d+
\frac{
2\sigma^2B_f(x)
+\sigma\partial_\sigma R_\sigma(f,x)
}{
f(x)+\sigma^2B_f(x)+R_\sigma(f,x)
}.
\end{align*}
Using
\[
|B_f(x)|\le C,
\qquad
|\sigma\partial_\sigma R_\sigma(f,x)|
\le C\sigma^{2+\eta},
\]
we obtain uniformly
\[
\left|
\sigma\partial_\sigma\log Z_\sigma(f,x)-d
\right|
\le C\sigma^2.
\]
By the definition of the FLIPD field in
\eqref{eq:flipd-target}, this proves
\eqref{eq:flipd-limit}.
\end{proof}

\section{Proof of Theorem~\ref{thm:minimax-lower}}
\label{app:minimax-lower}

\subsection{Assouad's Scheme}
\label{app:assouad}

We first fix the uniformity conventions used throughout this appendix.  Let
$r_0,c_0,C_0$ be the constants in
Proposition~\ref{prop:uniform-normal-atlas}, and let
\[
 I:=[1/2,2].
\]
All distances between points of $\cM$ are ambient Euclidean distances.  A
constant denoted by $c$ or $C$ may depend on the fixed model-class
parameters and on the fixed bump profiles, but not on $a,s,n,\theta$, the
number of bump centers, or their signs.  The value of a constant may change
from line to line.  Whenever a statement holds for $0<s\le s_0$, the
constant $s_0$ is uniform in all the variables displayed in that statement.

Fix $x_\star\in\cM$ and the compact coordinate subpatch
\begin{equation}
\label{eq:packing-patch}
 K:=F_{x_\star}\!\left(\overline{B_d(0,r_0/8)}\right).
\end{equation}
By the area formula and \eqref{eq:atlas-jacobian},
$\vol_\cM(K)\ge c>0$.  All packing centers used below will lie in $K$.

The FLIPD construction starts from a nonzero smooth profile
$\psi\in C_c^\infty(\R^d)$ satisfying $\int_{\R^d}\psi=0$.  The next lemma
transfers this profile to $\cM$ while preserving zero mass and keeping all
constants uniform in the center.

\begin{lemma}
\label{lem:scaled-bump}
Fix a nonzero $\psi\in C_c^\infty(\R^d)$ with
$\int_{\R^d}\psi=0$.  There exist a nonnegative
$\chi\in C_c^\infty(\R^d)$ with $\int_{\R^d}\chi>0$, a radius $R_b<\infty$,
and constants $s_b,c_b,C_b>0$ such that
\[
 \supp(\psi)\cup\supp(\chi)\subset B_d(0,R_b)
\]
and the following holds.  For every $a\in\cM$ and $0<s\le s_b$, define
\begin{equation}
\label{eq:manifold-bump-definition}
 c_{a,s}:=
 \frac{\int_{\R^d}\psi(w)J_a(sw)\,\dd w}
      {\int_{\R^d}\chi(w)J_a(sw)\,\dd w},
 \qquad
 b_{a,s}(F_a(sw)):=\psi(w)-c_{a,s}\chi(w),
\end{equation}
on the scaled support, and set $b_{a,s}=0$ elsewhere on $\cM$.  Then the
zero extension belongs to $\mathcal H^\alpha(\cM)$ and, uniformly over
$a\in\cM$ and $0<s\le s_b$,
\begin{align}
 \int_\cM b_{a,s}\,\dd\vol_\cM&=0,
 \notag\\
 |c_{a,s}|&\le C_b s^2,
 \label{eq:bump-mass}\\
 \|b_{a,s}\|_\infty&\le C_b,
 \notag\\
 \|b_{a,s}\|_{\mathcal H^\alpha(\cM)}&\le C_b s^{-\alpha},
 \label{eq:bump-holder}\\
 c_b s^d
 \le\int_\cM b_{a,s}^2\,\dd\vol_\cM
 &\le C_b s^d.
 \label{eq:bump-L2}
\end{align}
Moreover,
\begin{equation}
\label{eq:bump-support}
 \supp(b_{a,s})\subset \cM\cap B_D(a,C_b s).
\end{equation}
\end{lemma}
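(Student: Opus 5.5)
The plan is to pull back along the chart $F_a$ and read off each property from the chart bounds of Proposition~\ref{prop:uniform-normal-atlas}. First fix $\chi\ge0$ in $C_c^\infty(\R^d)$ with $\int\chi>0$, and choose $R_b$ so that both supports lie in $B_d(0,R_b)$. Take $s_b\le r_0/(2R_b)$, so that $sw$ with $\|w\|\le R_b$ stays inside the chart ball. Then $b_{a,s}$ is well defined on $F_a(B_d(0,sR_b))$.

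Zero mass is immediate from the area formula. We have
\[
 \int_\cM b_{a,s}\,\dd\vol_\cM = s^d\int(\psi-c_{a,s}\chi)(w)J_a(sw)\,\dd w,
\]
and this vanishes by the definition of $c_{a,s}$. For \eqref{eq:bump-mass}, I will use the expansion \eqref{eq:B-J-expansion}, or more simply $J_a(sw)=1+O(s^2\|w\|^2)$; the latter follows from $DG_a(0)=0$ and $\|D^2G_a\|\le C_0$, which give $\|DG_a(sw)\|\le Cs\|w\|$ and hence $J_a-1=O(\|DG_a\|^2)$.

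Since $\int\psi=0$, the numerator is $O(s^2)$. The denominator is at least $c_0\int\chi>0$. These give $|c_{a,s}|\le C_bs^2$. The sup bound then follows, since $\|b_{a,s}\|_\infty\le\|\psi\|_\infty+C s^2\|\chi\|_\infty$.

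For the $L^2$ bound \eqref{eq:bump-L2}, the area formula gives
\[
 s^d\int(\psi-c_{a,s}\chi)^2J_a(sw)\,\dd w .
\]
Since $J_a\in[c_0,C_0]$ and $c_{a,s}\to0$, this lies between $c\,s^d\int\psi^2/2$ and $C s^d$ once $s_b$ is small enough. The support claim \eqref{eq:bump-support} follows from \eqref{eq:atlas-bilipschitz}: $\|F_a(sw)-a\|\le C_0sR_b$.

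The main step is the Hölder bound \eqref{eq:bump-holder}, because the norm \eqref{eq:manifold-hold-norm} is defined through the charts $\Psi_x$ at \emph{every} center $x$, not just at $a$. In the $a$-chart, $b_{a,s}\circ F_a(v)=g(v/s)$ with $g:=\psi-c_{a,s}\chi$, a fixed smooth profile with uniformly bounded $C^{s_\alpha+1}$ norms. So $\|D^j(g(\cdot/s))\|_\infty\le Cs^{-j}$, and the top Hölder seminorm is $\le Cs^{-\alpha}$ by interpolating between $D^{s_\alpha}$ and $D^{s_\alpha+1}$. Since $s\le1$, every term is $\le Cs^{-\alpha}$.

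For a general center $x$, write $b_{a,s}\circ F_x=(b_{a,s}\circ F_a)\circ\Theta_{ax}$ on $\Omega_{xa}$, and zero outside it. The composition part is handled by the uniform transition bound \eqref{eq:uniform-transition-bound} together with the chain rule and Faà di Bruno in $C^{s_\alpha,\gamma_\alpha}$. Each derivative of the outer function costs at most $s^{-1}$, and derivatives of $\Theta$ are $O(1)$, so the bound stays $\le Cs^{-\alpha}$ (the composition of Hölder maps with Lipschitz inner maps preserves the seminorm up to constants).

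The delicate point is the zero extension across the boundary of $\Omega_{xa}$. I would handle it by noting that $\supp(b_{a,s})\subset B_D(a,C_bs)$ is compactly inside the $a$-chart image. Its preimage under $F_x$ is therefore a compact subset of the open set $\Omega_{xa}$, at positive distance from its boundary relative to the $x$-ball. Consequently the extended function is smooth, and the Hölder quotient for pairs $u,v$ straddling the support is controlled by the same within-support estimate along segments, or by the trivial sup bound when $\|u-v\|\gtrsim s$. The conversion to the $\Psi_x$ coordinates only involves the isometry $U_x$.
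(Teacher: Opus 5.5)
Your proposal is correct and follows the paper's proof essentially step by step. The steps match: the area formula gives zero mass; $|J_a(sw)-1|\le Cs^2\|w\|^2$ together with $\int\psi=0$ gives $|c_{a,s}|\le C_bs^2$; scaling in the $a$-chart, then the uniform transition bound \eqref{eq:uniform-transition-bound} and the chain rule, give \eqref{eq:bump-holder}; the area formula with the Jacobian bounds gives \eqref{eq:bump-L2}; and the bi-Lipschitz bound gives \eqref{eq:bump-support}.

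You are more explicit than the paper about the zero extension across $\partial\Omega_{xa}$, where the uniform positive distance you need comes from \eqref{eq:atlas-complement-separation}. Your $L^2$ lower bound via $J_a\ge c_0$ and $c_{a,s}=O(s^2)$ is an equivalent variant of the paper's uniform-convergence argument.
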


\begin{proof}
Choose a nonnegative $\chi\in C_c^\infty(\R^d)$ with positive integral and
then choose $R_b$ so that both profiles are supported in $B_d(0,R_b)$.
Decrease $s_b$ so that $s_bR_b<r_0/4$.  Then every point $F_a(sw)$ used in
\eqref{eq:manifold-bump-definition} lies strictly inside the chart domain.
Because the profiles vanish on a neighborhood of the boundary of their
common supporting ball, extension by zero defines an
$\mathcal H^\alpha(\cM)$ function.

The area formula and the change of variables $v=sw$ give
\begin{align*}
 \int_\cM b_{a,s}\,\dd\vol_\cM
 &=s^d\left\{
 \int\psi(w)J_a(sw)\,\dd w
 -c_{a,s}\int\chi(w)J_a(sw)\,\dd w
 \right\}=0.
\end{align*}
We next quantify the correction $c_{a,s}$.  From
$DG_a(0)=0$ and the uniform bound on $D^2G_a$,
\[
 \|DG_a(sw)\|\le Cs\|w\|.
\]
Since
$J_a(sw)=\sqrt{\det\{I_d+DG_a(sw)^\top DG_a(sw)\}}$, it follows, uniformly
for $w\in B_d(0,R_b)$, that
\begin{equation}
\label{eq:jacobian-quadratic-bound}
 |J_a(sw)-1|\le Cs^2\|w\|^2.
\end{equation}
Using $\int\psi=0$, the numerator in \eqref{eq:manifold-bump-definition}
is therefore bounded by $Cs^2$.  The denominator is bounded below by
$c_0\int\chi>0$ by \eqref{eq:atlas-jacobian}.  This proves
\eqref{eq:bump-mass}, and the sup-norm bound follows immediately.

For the H\"older bound, write, in the chart centered at $a$,
\[
 (b_{a,s}\circ F_a)(v)
 =\psi(v/s)-c_{a,s}\chi(v/s),
\]
where both profiles are understood to be zero off their supports.  For
$0\le k\le s_\alpha$, scaling gives
\[
 \|D^k(b_{a,s}\circ F_a)\|_\infty\le Cs^{-k},
 \qquad
 [D^{s_\alpha}(b_{a,s}\circ F_a)]_{C^{0,\gamma_\alpha}}
 \le Cs^{-\alpha}.
\]
The transition maps between the tangent--normal charts satisfy the uniform
$C^{s_\alpha,\gamma_\alpha}$ bound
\eqref{eq:uniform-transition-bound}.  Applying the chain rule proves
\eqref{eq:bump-holder}.

Finally, uniformly in $a$,
\begin{align*}
 s^{-d}\int_\cM b_{a,s}^2\,\dd\vol_\cM
 &=\int_{\R^d}|\psi(w)-c_{a,s}\chi(w)|^2J_a(sw)\,\dd w
 \longrightarrow \|\psi\|_{L^2(\R^d)}^2.
\end{align*}
The convergence is uniform by \eqref{eq:bump-mass} and
\eqref{eq:jacobian-quadratic-bound}.  Decreasing $s_b$ once more makes the
last display lie between two fixed positive constants, proving
\eqref{eq:bump-L2}.  The support bound follows from
\eqref{eq:atlas-bilipschitz}.
\end{proof}

The following elementary consequence is needed because the Assouad
hypercube contains order $s^{-d}$ bumps.  Applying the triangle inequality
to their H\"older norms would introduce an incorrect factor of order
$s^{-d}$; separation of the supports avoids that loss.

\begin{lemma}
\label{lem:separated-bump-holder}
Let $a_1,\ldots,a_m\in\cM$ satisfy
$\|a_j-a_k\|\ge L_0s$ for $j\ne k$, where $L_0>2C_b$, and write
$b_j=b_{a_j,s}$.  For every choice of coefficients
$|\omega_j|\le1$,
\begin{equation}
\label{eq:separated-bump-holder}
 \left\|\sum_{j=1}^m\omega_jb_j\right\|_{\mathcal H^\alpha(\cM)}
 \le Cs^{-\alpha},
\end{equation}
where $C$ is independent of $m$ and of the coefficients.
\end{lemma}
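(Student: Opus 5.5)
The plan is to exploit that the $\mathcal H^\alpha(\cM)$ norm of \eqref{eq:manifold-holder-norm} is a supremum over centers $x\in\cM$ of chart norms $\|g\circ\Psi_x\|_{C^{s_\alpha,\gamma_\alpha}(B(0,r_\cM))}$. Set $g:=\sum_j\omega_jb_j$. It then suffices to bound, uniformly in $x$, the sup norms of $D^k(g\circ F_x)$ for $k\le s_\alpha$ and the $\gamma_\alpha$-H\"older seminorm of $D^{s_\alpha}(g\circ F_x)$ on the chart ball, by $Cs^{-\alpha}$.

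The first step uses support separation. By \eqref{eq:bump-support}, $\supp b_j\subset B_D(a_j,C_bs)$. Since $\|a_j-a_k\|\ge L_0s>2C_bs$, these ambient balls are pairwise disjoint, and they are separated by a gap of at least $(L_0-2C_b)s=:\delta s>0$. Every point of $\cM$ therefore lies in the support of at most one $b_j$. The derivative bounds then follow from this single-bump picture. At any $v$, $D^k(g\circ F_x)(v)$ equals $\omega_jD^k(b_j\circ F_x)(v)$ for the unique relevant $j$, or vanishes. This is because the supports are closed and disjoint, so $g$ agrees with a single term on a neighborhood of each point. Applying \eqref{eq:bump-holder} to each $b_j$ separately gives $\|D^k(g\circ F_x)\|_\infty\le\max_j\|b_j\|_{\mathcal H^\alpha}\le C_bs^{-\alpha}$, with no dependence on $m$.

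The second step is the H\"older seminorm, which I expect to be the main (though mild) obstacle, since it is the only genuinely nonlocal quantity. Take $u,v$ in the chart ball and write $\Phi:=D^{s_\alpha}(g\circ F_x)$.
\begin{itemize}
\item If $u$ and $v$ are influenced by the same bump $j$, or by at most one bump, then $|\Phi(u)-\Phi(v)|$ equals the difference for $\omega_jb_j$ alone. This is at most $C_bs^{-\alpha}\|u-v\|^{\gamma_\alpha}$, because $b_j\circ F_x$ is globally $C^{s_\alpha,\gamma_\alpha}$ on the chart: it is a zero extension, and vanishing regions contribute zero.
\item If $F_x(u)\in\supp b_j$ and $F_x(v)\in\supp b_k$ with $j\ne k$, then \eqref{eq:atlas-bilipschitz} gives $\|u-v\|\ge\|F_x(u)-F_x(v)\|/C_0\ge\delta s/C_0$. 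In that case
\[
|\Phi(u)-\Phi(v)|\le|\Phi(u)|+|\Phi(v)|\le 2Cs^{-s_\alpha}\le 2C(C_0/\delta)^{\gamma_\alpha}s^{-\alpha}\|u-v\|^{\gamma_\alpha},
\]
using $\|D^{s_\alpha}(b_j\circ F_x)\|_\infty\le Cs^{-s_\alpha}$. This last bound follows from the scaling in the proof of Lemma~\ref{lem:scaled-bump} together with the transition bounds \eqref{eq:uniform-transition-bound}.
\end{itemize}

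One could instead insert an intermediate point on the segment from $u$ to $v$ that lies outside all supports, but the crude triangle-inequality bound above avoids this. Combining both cases and taking the supremum over $x$ yields \eqref{eq:separated-bump-holder}, with $C$ depending only on $C_b$, $C_0$, $L_0$ and the class parameters, and not on $m$ or the coefficients $\omega_j$.
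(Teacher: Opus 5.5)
Your proposal is correct and follows essentially the same route as the paper. Both arguments rest on three steps:
\begin{enumerate}
\item support separation by $(L_0-2C_b)s$, so at most one summand is active near any point;
\item single-bump bounds for the derivative sup norms, and for the H\"older seminorm when both points see the same bump or only one bump;
\item for points in different supports, the chart-distance lower bound $\|u-v\|\gtrsim s$ combined with the sharper sup bound $\|D^{s_\alpha}(b_j\circ F_x)\|_\infty\lesssim s^{-s_\alpha}$.
\end{enumerate}
The differences are cosmetic: you work directly in the centered charts of \eqref{eq:manifold-holder-norm}, and you bound the lower-order sup norms by $C_bs^{-\alpha}$ rather than $Cs^{-k}$, which still suffices for the claimed bound.
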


\begin{proof}
By \eqref{eq:bump-support}, distinct supports are separated by at least
$(L_0-2C_b)s$.  In any uniformly regular chart, at most one summand is
nonzero at a given point, so all derivative sup norms of order
$k\le s_\alpha$ are bounded by $Cs^{-k}$.

It remains to control the $\gamma_\alpha$-H\"older seminorm of the top
derivative.  If two chart points meet the same support, or if one point is
outside all supports and the other meets a single support, the single-bump
bound in Lemma~\ref{lem:scaled-bump} applies to the zero extension.  If the
two points meet different supports, their manifold, ambient, and chart
distances are all bounded below by $cs$ by
\eqref{eq:atlas-bilipschitz}.  Hence
\[
 \frac{\|D^{s_\alpha}b_j(u)-D^{s_\alpha}b_k(v)\|}
      {\|u-v\|^{\gamma_\alpha}}
 \le \frac{Cs^{-s_\alpha}}{(cs)^{\gamma_\alpha}}
 \le Cs^{-\alpha}.
\]
The transition estimate \eqref{eq:uniform-transition-bound} then gives
\eqref{eq:separated-bump-holder} in the norm
\eqref{eq:manifold-holder-norm}.
\end{proof}

\begin{lemma}\label{lem:adjacent-KL}
Let $g$ be a density on $\cM$ with $g\ge g_*>0$.  Suppose
$f_\pm=g\pm\theta b_{a,s}$ and
$|\theta|C_b\le g_*/2$.  Then $f_\pm$ are densities and
\begin{equation}
\label{eq:adjacent-KL}
 \KL(P_{f_+}^{\otimes n},P_{f_-}^{\otimes n})
 \le Cn\theta^2s^d,
\end{equation}
where $C$ depends on $g_*^{-1}$ but not on $a,s,n$, or $\theta$.
\end{lemma}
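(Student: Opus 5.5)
The plan is to verify the density property directly and then bound the KL divergence by the $\chi^2$ divergence. Tensorization reduces everything to a single-sample estimate.

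\emph{Densities.} By Lemma~\ref{lem:scaled-bump}, $\int_\cM b_{a,s}\,\dd\vol_\cM=0$, so $\int_\cM f_\pm\,\dd\vol_\cM=\int_\cM g\,\dd\vol_\cM=1$. Moreover $\|b_{a,s}\|_\infty\le C_b$, so
\[
 f_\pm\ge g-|\theta|C_b\ge g_*/2>0 .
\]
Hence $f_\pm$ are probability densities on $\cM$, bounded below by $g_*/2$. (Membership in $\mathcal F_\alpha(\cM)$ is not claimed here and is not needed.)

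\emph{Tensorization and $\chi^2$ bound.} Since the samples are i.i.d.,
\[
 \KL(P_{f_+}^{\otimes n},P_{f_-}^{\otimes n})=n\,\KL(P_{f_+},P_{f_-}).
\]
For the single-sample term, I would use the elementary inequality $\KL(P,Q)\le\chi^2(P,Q)$, which follows from $\log t\le t-1$ applied inside $\int p\log(p/q)$. This gives
\[
 \KL(P_{f_+},P_{f_-})\le\int_\cM\frac{(f_+-f_-)^2}{f_-}\,\dd\vol_\cM
 =\int_\cM\frac{4\theta^2b_{a,s}^2}{f_-}\,\dd\vol_\cM
 \le\frac{8\theta^2}{g_*}\int_\cM b_{a,s}^2\,\dd\vol_\cM .
\]

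\emph{Conclusion.} The upper half of \eqref{eq:bump-L2} gives $\int_\cM b_{a,s}^2\,\dd\vol_\cM\le C_bs^d$ for $0<s\le s_b$. Combining the displays yields
\[
 \KL(P_{f_+}^{\otimes n},P_{f_-}^{\otimes n})\le 8C_b\,g_*^{-1}\,n\theta^2s^d .
\]
The constant $8C_bg_*^{-1}$ depends only on $g_*^{-1}$ and the fixed bump profiles, and not on $a$, $s$, $n$, or $\theta$.

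The only step needing any care is the lower bound on the denominator $f_-$. It is exactly what the hypothesis $|\theta|C_b\le g_*/2$ provides, so I expect no real obstacle; the rest is routine.
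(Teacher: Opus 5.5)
Your proof is correct and follows the paper's argument essentially step for step. Both use zero mass of $b_{a,s}$ and the lower bound $f_-\ge g_*/2$, bound $\KL\le\chi^2=\int(f_+-f_-)^2/f_-\le 8\theta^2g_*^{-1}\int b_{a,s}^2$, apply \eqref{eq:bump-L2}, and then tensorize.
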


\begin{proof}
The zero-mass property in Lemma~\ref{lem:scaled-bump} shows that both
functions integrate to one.  Moreover, $f_-\ge g_*/2$.  Therefore
\begin{align*}
 \KL(P_{f_+},P_{f_-})
 &\le \chi^2(P_{f_+},P_{f_-})
 =\int_\cM\frac{(f_+-f_-)^2}{f_-}\,\dd\vol_\cM\\
 &\le \frac{8\theta^2}{g_*}
       \int_\cM b_{a,s}^2\,\dd\vol_\cM
 \le C\theta^2s^d.
\end{align*}
Tensorization of the Kullback--Leibler divergence proves
\eqref{eq:adjacent-KL}.
\end{proof}

\begin{lemma}
\label{lem:two-point}
Let $Q_0,Q_1$ be probability measures on the same measurable space, let
$H$ be a separable Hilbert space, and let $u_0,u_1\in H$.  For every
measurable $H$-valued estimator $\widehat u$,
\begin{equation}
\label{eq:two-point}
 \E_{Q_0}\|\widehat u-u_0\|_H^2
 +\E_{Q_1}\|\widehat u-u_1\|_H^2
 \ge\frac{1-\TV(Q_0,Q_1)}2\|u_0-u_1\|_H^2,
\end{equation}
where $\TV(Q_0,Q_1)=\sup_A|Q_0(A)-Q_1(A)|$.
\end{lemma}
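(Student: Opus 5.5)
The plan is the classical Le Cam two-point argument, proved by a pointwise reduction followed by a total-variation bound on the event where the estimator is closer to one parameter than the other. Set $\Delta:=\|u_0-u_1\|_H$. If $\Delta=0$ there is nothing to prove, so assume $\Delta>0$. For any realization of $\widehat u$, the triangle inequality gives $\|\widehat u-u_0\|_H+\|\widehat u-u_1\|_H\ge\Delta$. Hence, on every sample, at least one of the two errors is at least $\Delta/2$. Define the measurable set
\[
 A:=\{\|\widehat u-u_0\|_H\ge\|\widehat u-u_1\|_H\}.
\]
Measurability follows because $\widehat u$ is measurable into a separable Hilbert space, so both norms are measurable real functions. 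On $A$ we have $\|\widehat u-u_0\|_H\ge\Delta/2$, and on $A^c$ we have $\|\widehat u-u_1\|_H\ge\Delta/2$.

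Next, lower bound each risk on the relevant event:
\[
 \E_{Q_0}\|\widehat u-u_0\|_H^2\ge\frac{\Delta^2}{4}Q_0(A),
 \qquad
 \E_{Q_1}\|\widehat u-u_1\|_H^2\ge\frac{\Delta^2}{4}Q_1(A^c).
\]
Summing these gives a lower bound of $\tfrac{\Delta^2}{4}\{Q_0(A)+1-Q_1(A)\}$. By the definition of total variation, $Q_1(A)-Q_0(A)\le\TV(Q_0,Q_1)$. The sum is therefore at least $\tfrac{\Delta^2}{4}\{1-\TV(Q_0,Q_1)\}$.

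This constant is $1/4$, whereas \eqref{eq:two-point} claims $1/2$. To gain the factor $2$, I would use a sharper pointwise inequality: $\|\widehat u-u_0\|^2+\|\widehat u-u_1\|^2\ge\Delta^2/2$, which follows from the parallelogram law with the minimum attained at the midpoint. I would then combine it with a common-mass argument. Let $\mu$ dominate both $Q_0$ and $Q_1$, with densities $q_0,q_1$. Then
\[
 \E_{Q_0}\|\widehat u-u_0\|^2+\E_{Q_1}\|\widehat u-u_1\|^2
 \ge\int\bigl(\|\widehat u-u_0\|^2+\|\widehat u-u_1\|^2\bigr)\min(q_0,q_1)\,\dd\mu
 \ge\frac{\Delta^2}{2}\int\min(q_0,q_1)\,\dd\mu .
\]
Finally, the identity $\int\min(q_0,q_1)\,\dd\mu=1-\TV(Q_0,Q_1)$ yields exactly the constant $\tfrac12$.

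The only genuinely delicate points are mild. One is the constant, which the min-density and parallelogram route settles directly. The other is measurability and integrability. I would handle the latter by noting that if either expectation is infinite, the inequality holds trivially. The Hilbert structure is used only through the parallelogram law, so the main step to state carefully is the pointwise bound $\|z-u_0\|^2+\|z-u_1\|^2\ge\tfrac12\|u_0-u_1\|^2$. This follows from the identity
\[
 \|z-u_0\|^2+\|z-u_1\|^2=2\|z-\tfrac{u_0+u_1}{2}\|^2+\tfrac12\|u_0-u_1\|^2 .
\]
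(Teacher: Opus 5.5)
Your final argument, which integrates the pointwise parallelogram bound $\|z-u_0\|_H^2+\|z-u_1\|_H^2\ge\tfrac12\|u_0-u_1\|_H^2$ against the common-mass measure with density $\min(q_0,q_1)$ of total mass $1-\TV(Q_0,Q_1)$, is exactly the paper's proof and is correct. The preliminary event-based argument giving the constant $1/4$ is an unnecessary detour and can be dropped.
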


\begin{proof}
Let $\mu=Q_0+Q_1$ and let $Q_0\wedge Q_1$ be the measure with
$\mu$-density
$\min\{\dd Q_0/\dd\mu,\dd Q_1/\dd\mu\}$.  Its total mass is
$1-\TV(Q_0,Q_1)$.  Integrating
\[
 \|v-u_0\|_H^2+\|v-u_1\|_H^2
 \ge \frac12\|u_0-u_1\|_H^2
\]
with respect to $Q_0\wedge Q_1$ proves the claim.
\end{proof}

We now quantify the fact that a Gaussian kernel centered near one bump sees
only an exponentially small contribution from the other bumps.  The
operator $r\partial_r$ in the next lemma acts only on the kernel scale
$rs$; the bump centers, supports, and the base scale $s$ are held fixed.

\begin{lemma}
\label{lem:shell-decoupling}
Fix integers $\ell,m\ge0$ and constants $C_q,C_b<\infty$.  There exist
$L_*>0$ and $c,C>0$ with the following property.  Let $L_0\ge L_*$ be
fixed.  For all sufficiently small $s$ (the upper bound may depend on the
fixed $L_0$), let $a_1,\ldots,a_M\in K$ be ambiently
$L_0s$-separated, let $b_k=b_{a_k,s}$, and suppose
$\|x-a_j\|\le C_qs$.  Then, uniformly in $r\in I$,
\begin{equation}
\label{eq:shell-sum}
 \sum_{k\ne j}\int_\cM
 \left|(r\partial_r)^\ell K_{rs,x}(u)\right|
 \left(1+\frac{\|u-x\|}{s}\right)^m
 |b_k(u)|\,\dd\vol_\cM(u)
 \le Cs^d\delta_{\ell,m}(L_0),
\end{equation}
where
\begin{equation}
\label{eq:shell-decay}
 \delta_{\ell,m}(L_0)
 :=\sum_{q\ge1}(q+1)^d
 \{1+(q+1)L_0\}^{m+2\ell}
 e^{-c(q+1)^2L_0^2}.
\end{equation}
In particular, $\delta_{\ell,m}(L_0)<\infty$ and
$\delta_{\ell,m}(L_0)\to0$ as $L_0\to\infty$.
\end{lemma}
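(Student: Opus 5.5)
The plan is to bound each summand pointwise, then sum over bumps grouped into dyadic-like shells according to their distance from $a_j$. First I will compute $(r\partial_r)^\ell K_{rs,x}(u)$ explicitly. Writing $t:=\|u-x\|^2/(r^2s^2)$, we have $K_{rs,x}(u)=e^{-t/2}$ and $r\partial_r t=-2t$. By induction, $(r\partial_r)^\ell K_{rs,x}(u)=P_\ell(t)e^{-t/2}$ for a polynomial $P_\ell$ of degree $\ell$. Since $r\in I=[1/2,2]$, this gives
\[
 \left|(r\partial_r)^\ell K_{rs,x}(u)\right|
 \le C\Bigl(1+\tfrac{\|u-x\|}{s}\Bigr)^{2\ell}e^{-\|u-x\|^2/(8s^2)}.
\]
The integrand is therefore bounded by $C(1+\|u-x\|/s)^{m+2\ell}e^{-\|u-x\|^2/(8s^2)}|b_k(u)|$. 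By Lemma~\ref{lem:scaled-bump}, $|b_k|\le C_b$ and $b_k$ is supported in $\cM\cap B_D(a_k,C_bs)$, whose volume is at most $C s^d$ by \eqref{eq:uniform-volume-growth} (valid once $s$ is small).

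Next I will lower-bound $\|u-x\|$ on the support of $b_k$, $k\ne j$. For such $u$, $\|u-x\|\ge\|a_k-a_j\|-C_bs-C_qs\ge\|a_k-a_j\|/2$ as soon as $L_0\ge L_*:=4(C_b+C_q)$. Similarly $\|u-x\|\le 2\|a_k-a_j\|$. Then I group the indices $k\ne j$ into shells
\[
 S_q:=\{k: qL_0s\le\|a_k-a_j\|<(q+1)L_0s\},\qquad q\ge1.
\]
On the bump $k\in S_q$ the integrand is at most $C\{1+2(q+1)L_0\}^{m+2\ell}e^{-q^2L_0^2/32}$, so each such bump contributes at most $Cs^d$ times this quantity. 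Since $q\ge1$ gives $q^2\ge(q+1)^2/4$, the exponential is at most $e^{-c(q+1)^2L_0^2}$ for a suitable $c$, and the constant factor in front of $L_0$ is absorbed into $C$.

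The remaining step is to count the bumps in each shell; this is the main point needing care. I would use volume packing on $\cM$. The sets $\cM\cap B_D(a_k,L_0s/2)$ are pairwise disjoint, and each has volume at least $c(L_0s)^d$ by \eqref{eq:uniform-volume-growth}, provided $L_0s\le c_0r_0$; this is where "$s$ small depending on $L_0$" enters. All of these sets for $k\in S_q$ lie in $\cM\cap B_D(a_j,(q+2)L_0s)$. For radii within the range of \eqref{eq:uniform-volume-growth}, that ball has volume at most $C((q+2)L_0s)^d$, giving $|S_q|\le C(q+1)^d$. For larger radii I would use instead the trivial bound $\vol_\cM(\cM)<\infty$, so the count is at most $C(L_0s)^{-d}$. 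This is not of the form $(q+1)^d$, but it is polynomial in $q$, since for those $q$ we have $q\gtrsim 1/(L_0s)$. Alternatively, all centers lie in $K$, so a chart-based count works as well. In either case the Gaussian factor $e^{-c(q+1)^2L_0^2}$ absorbs any polynomial loss after shrinking $c$. Summing over $q$ then yields \eqref{eq:shell-sum} with $\delta_{\ell,m}(L_0)$.

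Finally, $\delta_{\ell,m}(L_0)<\infty$ because the summand decays like a Gaussian in $q$. To show $\delta_{\ell,m}(L_0)\to0$, I would dominate each term for $L_0\ge1$ by $(q+1)^{d+m+2\ell}L_0^{m+2\ell}e^{-c(q+1)^2L_0^2}\le C(q+1)^{d+m+2\ell}e^{-c(q+1)^2L_0^2/2}$. Each term tends to zero as $L_0\to\infty$, and the terms are bounded by a summable sequence in $q$, so dominated convergence gives the limit.
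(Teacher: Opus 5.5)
Your proposal is correct and follows essentially the same route as the paper: shells $qL_0s\le\|a_k-a_j\|<(q+1)L_0s$, a packing count $|S_q|\le C(q+1)^d$, the distance bound from $L_0\ge 4(C_b+C_q)$, a pointwise Gaussian bound on $(r\partial_r)^\ell K_{rs,x}$, and dominated convergence to get $\delta_{\ell,m}(L_0)\to0$. One small correction: in the large-radius case your count $C(L_0s)^{-d}$ is already at most $C(q+1)^d$, because $q+2\gtrsim (L_0s)^{-1}$ (as the paper notes), so you do not need to absorb any polynomial loss into the Gaussian factor.
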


\begin{proof}
For $q\ge1$, let
\[
 \mathcal A_q
 :=\{k\ne j:qL_0s\le\|a_k-a_j\|<(q+1)L_0s\}.
\]
The ambient balls
$\cM\cap B_D(a_k,L_0s/3)$, $k\in\mathcal A_q$, are disjoint.  If
$(q+2)L_0s$ is below the volume-growth radius in
\eqref{eq:uniform-volume-growth}, comparison with the ball centered at
$a_j$ gives $|\mathcal A_q|\le C(q+1)^d$.  If
$(q+2)L_0s$ exceeds that radius, compactness and the lower volume bound give
$|\mathcal A_q|\le C(L_0s)^{-d}$, whereas
$q+2\ge c(L_0s)^{-1}$; the same estimate follows.  Thus
\begin{equation}
\label{eq:shell-cardinality}
 |\mathcal A_q|\le C(q+1)^d
\end{equation}
for every $q\ge1$.  Here $s$ has been chosen small enough that
$L_0s/3$ lies in the range of \eqref{eq:uniform-volume-growth}.

Choose $L_*$ so that $L_0\ge4(C_b+C_q)$.  If
$k\in\mathcal A_q$ and $u\in\supp(b_k)$, then
\[
 \|u-x\|
 \ge qL_0s-(C_b+C_q)s
 \ge c(q+1)L_0s.
\]
For $r\in I$, repeated differentiation gives
\[
 \left|(r\partial_r)^\ell K_{rs,x}(u)\right|
 \le C_\ell
 \left(1+\frac{\|u-x\|}{s}\right)^{2\ell}
 \exp\!\left(-c\frac{\|u-x\|^2}{s^2}\right).
\]
Furthermore, Lemma~\ref{lem:scaled-bump} and
\eqref{eq:uniform-volume-growth} give
$\|b_k\|_\infty\le C$ and
$\vol_\cM(\supp b_k)\le Cs^d$.  Hence the total contribution from
$\mathcal A_q$ is at most
\[
 Cs^d(q+1)^d
 \{1+(q+1)L_0\}^{m+2\ell}
 e^{-c(q+1)^2L_0^2}.
\]
Summation in $q$ proves \eqref{eq:shell-sum}.  The final assertion follows
by dominated convergence, for example after restricting to $L_0\ge L_*$.
\end{proof}

For later use, set
\begin{equation}
\label{eq:delta-L0-definition}
 \delta(L_0):=\delta_{0,0}(L_0)+\delta_{1,0}(L_0).
\end{equation}

The next proposition is the precise Assouad reduction used in the proof.  In
contrast with an informal ``remote-bump stability'' assumption, its
hypothesis states exactly the separation required for every adjacent edge of
the hypercube.

\begin{proposition}[Assouad reduction for FLIPD]
\label{prop:assouad-reduction}
Suppose that there exist constants satisfying
\[
 L_0,s_A,c_A,C_q>0,\qquad \kappa\ge0,
 \qquad L_0>2(C_b+C_q).
\]
Assume that the following holds for every
$0<s\le s_A$.  For every finite ambiently $L_0s$-separated family
$a_1,\ldots,a_m\in K$, put $b_j=b_{a_j,s}$ and
\[
 f_\omega:=f_0+\theta\sum_{j=1}^m\omega_jb_j,
 \qquad \omega\in\{-1,1\}^m,
\]
where $0<\theta\le s^2$.  Assume that these functions are positive
densities and that there are measurable sets
\[
 U_{a_j,s}\subset \cM\cap B_D(a_j,C_qs)
\]
such that, for every $\omega$ and every $j$,
\begin{equation}
\label{eq:generic-one-bump}
 \int_{U_{a_j,s}}
 \left|T_s(x;f_\omega)-T_s(x;f_{\omega^{(j)}})\right|^2
 \,\dd\vol_\cM(x)
 \ge c_A\theta^2s^{d+2\kappa},
\end{equation}
where $\omega^{(j)}$ is obtained from $\omega$ by flipping its $j$th
coordinate.  Then, for every
$n$ and $s$ satisfying $h_n\le s\le s_A\wedge1$,
\begin{equation}
\label{eq:generic-lower}
 \inf_{\widehat T}\sup_{P_f\in\cP_\alpha(\cM,d)}
 \mathcal R_{n,s}(\widehat T,f)
 \ge c\frac{s^{2\kappa}}{ns^d}.
\end{equation}
\end{proposition}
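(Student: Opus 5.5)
The plan is the standard Assouad hypercube argument. Everything except the choice of $\theta$ and the number of bumps is already supplied by the hypotheses and by Lemmas~\ref{lem:separated-bump-holder}, \ref{lem:adjacent-KL} and \ref{lem:two-point}.

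\textbf{Construction.} Fix $s$ with $h_n\le s\le s_A\wedge1$. I take a maximal ambiently $L_0s$-separated family $a_1,\ldots,a_m$ in $K$. By maximality and the volume bounds \eqref{eq:uniform-volume-growth}, together with $\vol_\cM(K)\ge c$, we get $m\ge c(L_0s)^{-d}\ge cs^{-d}$. I then set
\[
\theta:=c_\theta\, (ns^d)^{-1/2}.
\]
The constraint $\theta\le s^2$ and the H\"older admissibility both need checking.

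\emph{H\"older admissibility.} By Lemma~\ref{lem:separated-bump-holder},
\[
\|f_\omega\|_{\mathcal H^\alpha}\le\|f_0\|_{\mathcal H^\alpha}+C\theta s^{-\alpha},
\]
and $L_f>\|f_0\|_{\mathcal H^\alpha}$ leaves a positive margin. Since $s\ge h_n$ we have $ns^{2\alpha+d}\ge1$, hence $\theta s^{-\alpha}=c_\theta(ns^{2\alpha+d})^{-1/2}\le c_\theta$, which is small once $c_\theta$ is. This is exactly where the lower limit $h_n$ enters.

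\emph{Remaining class constraints.} We have $\theta\le c_\theta s^\alpha\le s^2$ because $\alpha>2$ and $s\le1$. The bound $\|b_j\|_\infty\le C_b$ gives $c_-\le f_\omega\le c_+$ for $c_\theta$ small, since $f_0$ lies strictly inside $(c_-,c_+)$. The zero-mass property gives $\int f_\omega=1$. So every $P_{f_\omega}$ lies in $\cP_\alpha(\cM,d)$.

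\textbf{Risk decomposition.} For any estimator $\widehat T$, I bound the sup over the class below by the average over $\omega$. I then use $f_\omega\ge c_-$ to drop the weight in the risk, and restrict the integral to the disjoint sets $U_{a_j,s}$. These sets are disjoint because $U_{a_j,s}\subset B_D(a_j,C_qs)$ and $L_0>2C_q$. This gives
\[
\sup_f\mathcal R\ \ge\ c\,2^{-m}\sum_\omega\sum_{j=1}^m\E_\omega\int_{U_j}|\widehat T-T_s(\cdot;f_\omega)|^2.
\]
Next, for each $j$ I pair $\omega$ with $\omega^{(j)}$ and apply Lemma~\ref{lem:two-point} in $H=L^2(U_j,\dd\vol_\cM)$, with $Q_0=P^{\otimes n}_{f_\omega}$ and $Q_1=P^{\otimes n}_{f_{\omega^{(j)}}}$.

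\emph{Controlling the total variation.} Write $f_\omega=g\pm\theta b_j$, where $g$ is the common part and $g\ge c_-/2$. Lemma~\ref{lem:adjacent-KL} gives $\KL\le Cn\theta^2s^d=Cc_\theta^2$, and Pinsker then gives $\TV\le1/2$ for $c_\theta$ small.

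\emph{Separation.} Hypothesis \eqref{eq:generic-one-bump} bounds the squared separation below by $c_A\theta^2s^{d+2\kappa}$.

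\textbf{Conclusion.} Summing over the $m$ coordinates and averaging over $\omega$ yields a lower bound of order
\[
m\,\theta^2s^{d+2\kappa}\ \gtrsim\ s^{-d}\cdot\frac{1}{ns^d}\cdot s^{d+2\kappa}=\frac{s^{2\kappa}}{ns^d},
\]
which is \eqref{eq:generic-lower}.

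\textbf{Main obstacle.} The delicate step is the simultaneous tuning of $\theta$ and the bump count. The density-class constraints, the hypothesis $\theta\le s^2$, and the KL bound must all hold with a single constant $c_\theta$ that is uniform in $n$, $s$ and $m$. The H\"older condition is the binding one, and it is satisfiable exactly when $ns^{2\alpha+d}\gtrsim1$. A secondary point is confirming $m\gtrsim s^{-d}$ via a packing argument inside $K$ that uses only \eqref{eq:uniform-volume-growth}.
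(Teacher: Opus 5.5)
Your proposal is correct and follows essentially the same route as the paper: a maximal $L_0s$-packing of $K$ with $m\gtrsim s^{-d}$, and the amplitude $\theta=\varepsilon(ns^d)^{-1/2}$, whose H\"older admissibility and the bound $\theta\le s^2$ both follow from $s\ge h_n$. It then uses the KL/Pinsker bound on adjacent vertices via Lemma~\ref{lem:adjacent-KL} and Lemma~\ref{lem:two-point} on the disjoint query regions $U_{a_j,s}$, exactly as the paper does (your bound $g\ge c_-/2$ is harmless; in fact $g\ge c_-$, since $g=f_0$ on $\supp(b_j)$ and $g$ coincides with $f_\omega$ elsewhere).
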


\begin{proof}
Fix $s\le s_A\wedge1$ and take a maximal $L_0s$-separated subset
$\{a_1,\ldots,a_{M_s}\}$ of $K$.  Maximality implies that the ambient
$L_0s$-balls centered at these points cover $K$.  The upper volume-growth
bound therefore yields
\[
 \vol_\cM(K)\le CM_s(L_0s)^d,
\]
and hence $M_s\ge cs^{-d}$.  Conversely, the ambient balls of radius
$L_0s/3$ centered at the $a_j$ are disjoint.  Their volumes are at least
$c(L_0s)^d$, so compactness of $\cM$ gives $M_s\le Cs^{-d}$.  Thus
\begin{equation}
\label{eq:packing-cardinality}
 cs^{-d}\le M_s\le Cs^{-d}.
\end{equation}
We take $s_A$ small enough that all radii used here lie in the range of
\eqref{eq:uniform-volume-growth}.  The assumed inequality
$L_0>2(C_b+C_q)$ ensures that both the bump supports and the query regions
are pairwise disjoint; the value of $L_0$ is not changed in the proof.

Set
\begin{equation}
\label{eq:assouad-amplitude}
 \theta:=\varepsilon(ns^d)^{-1/2},
 \qquad
 f_\omega:=f_0+\theta\sum_{j=1}^{M_s}\omega_jb_j,
 \qquad
 \omega\in\{-1,1\}^{M_s}.
\end{equation}
Because $s\ge h_n=n^{-1/(2\alpha+d)}$,
\begin{equation}
\label{eq:holder-amplitude-equivalence}
 (ns^d)^{-1/2}\le s^\alpha.
\end{equation}
Lemmas~\ref{lem:scaled-bump} and
\ref{lem:separated-bump-holder} give
\begin{align}
 \int_\cM f_\omega\,\dd\vol_\cM&=1,
 \notag\\
 \|f_\omega-f_0\|_\infty&\le C\varepsilon,
 \notag\\
 \|f_\omega-f_0\|_{\mathcal H^\alpha(\cM)}
 &\le C\theta s^{-\alpha}\le C\varepsilon.
 \label{eq:hypercube-holder}
\end{align}
Define the positive slack
\[
 \Delta:=\min\left\{
 f_0-c_-,\ c_+-f_0,\
 L_f-\|f_0\|_{\mathcal H^\alpha(\cM)}
 \right\}>0.
\]
Choose $\varepsilon\le1$ so that
\begin{equation}
\label{eq:epsilon-model-and-KL}
 C\varepsilon\le\Delta/2,
 \qquad C_b\varepsilon\le c_-/2.
\end{equation}
Then
$P_{f_\omega}\in\cP_\alpha(\cM,d)$ for every $\omega$.  Moreover,
\eqref{eq:holder-amplitude-equivalence}, $s\le1$, and $\alpha>2$ imply
\begin{equation}
\label{eq:theta-below-squared-scale}
 \theta\le\varepsilon s^\alpha\le s^2.
\end{equation}

For adjacent sign vectors $\omega$ and $\omega^{(j)}$, apply
Lemma~\ref{lem:adjacent-KL} with
$g=f_0+\theta\sum_{k\ne j}\omega_kb_k$.  The supports are disjoint.  On
$\supp(b_j)$ one has $g=f_0$, while off $\supp(b_j)$ the function $g$
coincides with both adjacent densities.  Hence $g\ge c_-$.  Also,
$\theta\le\varepsilon$ and \eqref{eq:epsilon-model-and-KL} imply
$\theta C_b\le c_-/2$.  Thus all hypotheses of
Lemma~\ref{lem:adjacent-KL} are satisfied, and
\begin{equation}
\label{eq:hypercube-KL}
 \KL(P_{f_\omega}^{\otimes n},
     P_{f_{\omega^{(j)}}}^{\otimes n})
 \le Cn\theta^2s^d=C\varepsilon^2.
\end{equation}
Pinsker's inequality gives
\[
 \TV(P_{f_\omega}^{\otimes n},
     P_{f_{\omega^{(j)}}}^{\otimes n})
 \le \sqrt{\tfrac12 C\varepsilon^2}.
\]
Decrease $\varepsilon$ once more so that the last quantity is at most
$1/2$.

Let $H_j=L^2(U_{a_j,s},\dd\vol_\cM)$.  Because the query regions are
disjoint and $f_\omega\ge c_-$, every estimator satisfies
\begin{align*}
 \sup_\omega\mathcal R_{n,s}(\widehat T,f_\omega)
 &\ge 2^{-M_s}\sum_\omega
        \mathcal R_{n,s}(\widehat T,f_\omega)\\
 &\ge c_-2^{-M_s}\sum_{j=1}^{M_s}\sum_\omega
 \E_\omega
 \|\widehat T-T_s(\cdot;f_\omega)\|_{H_j}^2.
\end{align*}
For a fixed $j$, the sign vectors form $2^{M_s-1}$ unordered adjacent
pairs.  Apply Lemma~\ref{lem:two-point} to each pair.  By
\eqref{eq:generic-one-bump} and the total-variation bound, each pair
contributes at least
$c\theta^2s^{d+2\kappa}$.  Consequently,
\begin{align*}
 \sup_\omega\mathcal R_{n,s}(\widehat T,f_\omega)
 &\ge cM_s\theta^2s^{d+2\kappa}\\
 &\ge c\theta^2s^{2\kappa}
 =c\varepsilon^2\frac{s^{2\kappa}}{ns^d}.
\end{align*}
Taking the infimum over $\widehat T$ proves \eqref{eq:generic-lower}.
\end{proof}

\subsection{Auxiliary Lemmas}
\label{app:ratio-expansion}

Let
\[
 \gamma_\tau(z):=(2\pi\tau^2)^{-d/2}
 \exp\!\left(-\frac{\|z\|^2}{2\tau^2}\right).
\]

\begin{lemma}\label{lem:ratio-expansion}
Fix a smooth compactly supported mean-zero profile $\psi$ and construct
$b_{a,s}$ as in Lemma~\ref{lem:scaled-bump}.  For every fixed $R<\infty$
there exist $s_R,C_R>0$ and functions $\rho_{a,s}(z,\tau)$ such that, for
all $a\in\cM$, $0<s\le s_R$, $\|z\|\le R$, and $\tau\in I$,
\begin{equation}
\label{eq:ratio-expansion}
 \frac{Z_{\tau s}(b_{a,s},F_a(sz))}
      {Z_{\tau s}(f_0,F_a(sz))}
 =\frac1{f_0}(\gamma_\tau*\psi)(z)
 +\rho_{a,s}(z,\tau),
\end{equation}
where
\begin{equation}
\label{eq:ratio-remainder}
 \sup_{\substack{a\in\cM,\ \|z\|\le R,\ \tau\in I}}
 \left\{|\rho_{a,s}(z,\tau)|
 +|\tau\partial_\tau\rho_{a,s}(z,\tau)|\right\}
 \le C_Rs^2.
\end{equation}
\end{lemma}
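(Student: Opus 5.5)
I will expand numerator and denominator separately and take the ratio. The denominator is handled by Lemma~\ref{lem:differentiable-kernel-mass} applied to the constant density $f_0$ at the point $F_a(sz)$ and scale $r=\tau s$. Since $\tau\in I$ and $s$ is small, $r\le\sigma_0$, so
\[
Z_{\tau s}(f_0,F_a(sz))=(2\pi)^{d/2}(\tau s)^d\{f_0+O(s^2)\}.
\]
The $O(s^2)$ term is $(\tau s)^2B_{f_0}+R_{\tau s}$, and by \eqref{eq:differentiable-kernel-remainder} it also satisfies $|\tau\partial_\tau(\cdot)|\le Cs^2$, because $\tau\partial_\tau=r\partial_r$ at fixed $s$.

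For the numerator I work in the chart $F_a$ and write $u=F_a(sw)$. Since $\supp(b_{a,s})\subset F_a(B_d(0,sR_b))$, the area formula gives
\[
Z_{\tau s}(b_{a,s},F_a(sz))=s^d\int_{\|w\|\le R_b}\exp\!\Big(-\frac{\|F_a(sw)-F_a(sz)\|^2}{2\tau^2s^2}\Big)\{\psi(w)-c_{a,s}\chi(w)\}J_a(sw)\,\dd w .
\]
Both $w$ and $z$ are bounded, by $R_b$ and $R$. I then use \eqref{eq:graph-distance-identity} together with $\|G_a(sw)-G_a(sz)\|\le Cs^2(\|w\|+\|z\|)\|w-z\|$, which follows from $DG_a(v)=O(\|v\|)$. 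This gives
\[
\frac{\|F_a(sw)-F_a(sz)\|^2}{s^2}=\|w-z\|^2+O_R(s^2).
\]
So the exponential equals $e^{-\|w-z\|^2/(2\tau^2)}(1+O_R(s^2))$. In addition, $J_a(sw)=1+O(s^2)$ by \eqref{eq:jacobian-quadratic-bound}, and $|c_{a,s}|\le C_bs^2$ by \eqref{eq:bump-mass}. Since the integration domain is a fixed compact set, these combine to
\[
\text{numerator}=s^d\Big\{(2\pi\tau^2)^{d/2}(\gamma_\tau*\psi)(z)+O_R(s^2)\Big\}.
\]
Dividing by the denominator, the factors $s^d$ and $(2\pi)^{d/2}\tau^d$ cancel, yielding \eqref{eq:ratio-expansion} with $|\rho|\le C_Rs^2$.

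For the $\tau\partial_\tau$ bound I differentiate under the integral, which is legitimate on the compact $w$-domain with $\tau\in[1/2,2]$. The factor $\tau\partial_\tau$ of the exponential equals $(\|F_a(sw)-F_a(sz)\|^2/(\tau^2s^2))$ times the exponential. The same expansion as before shows this differs from its flat counterpart $\tau\partial_\tau e^{-\|w-z\|^2/(2\tau^2)}$ by $O_R(s^2)$, uniformly in $a$ and $\tau$. Hence $\tau\partial_\tau$ of the normalized numerator is $\tau\partial_\tau[(2\pi\tau^2)^{d/2}\gamma_\tau*\psi]+O_R(s^2)$. The normalized denominator is $f_0\tau^d+O(s^2)$, and its $\tau$-derivative has the same form.

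Finally, I apply the quotient rule to $N/D$, where $N=N_0+O(s^2)$ and $D=D_0+O(s^2)$ with matching $\tau\partial_\tau$ control. Because $D_0=f_0\tau^d\ge c>0$ on $I$, the remainder and its $\tau\partial_\tau$ derivative are both $O_R(s^2)$. The main technical point I expect is making the chordal-distance expansion uniform in $a$, $\|z\|\le R$ and $\tau$ including the derivative. Everything rests on the uniform bounds \eqref{eq:atlas-regularity}, with $s_R$ chosen so that $s(R+R_b)<r_0$. This keeps both $sz$ and $sw$ inside one chart of $F_a$.
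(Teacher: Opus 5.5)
Your proposal is correct and follows essentially the same route as the paper. The paper also expands the numerator in the chart $F_a$, using the graph identity, $\|DG_a(v)\|\le C\|v\|$, $J_a(sw)=1+O(s^2)$ and $c_{a,s}=O(s^2)$. It obtains the denominator from Lemma~\ref{lem:differentiable-kernel-mass} at scale $\tau s$ with $\tau\partial_\tau=r\partial_r$, and concludes by the quotient rule. Your Lipschitz argument for $t\mapsto (t/\tau^2)e^{-t/(2\tau^2)}$ simply spells out the paper's mean-value step for the $\tau\partial_\tau$ bound.
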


\begin{proof}
Choose $s_R$ so that $s_R(R\vee R_b)<r_0/4$.  With $x=F_a(sz)$ and the
change of variables $v=sw$, the numerator is
\begin{equation}
\label{eq:ratio-numerator-chart}
 s^d\int
 \exp\!\left(-\frac{\|F_a(sw)-F_a(sz)\|^2}{2\tau^2s^2}\right)
 \{\psi(w)-c_{a,s}\chi(w)\}J_a(sw)\,\dd w.
\end{equation}
For $\|z\|\le R$ and $w$ in the fixed support of the profiles, the graph
identity \eqref{eq:graph-distance-identity}, the bound
$\|DG_a(v)\|\le C\|v\|$, and
\eqref{eq:jacobian-quadratic-bound} imply
\begin{align}
 \frac{\|F_a(sw)-F_a(sz)\|^2}{s^2}
 &=\|w-z\|^2+O(s^2),
 \label{eq:local-distance-ratio}\\
 J_a(sw)&=1+O(s^2),
 \label{eq:local-jacobian-ratio}
\end{align}
where both remainders are uniform in $a,z,w$.  The bounds remain valid after
one $\tau\partial_\tau$ derivative of the exponential, because
$\tau\in I$.  Together with $c_{a,s}=O(s^2)$, the mean-value theorem in
\eqref{eq:ratio-numerator-chart} gives
\begin{align}
 Z_{\tau s}(b_{a,s},F_a(sz))
 =s^d(2\pi)^{d/2}\tau^d
 \{(\gamma_\tau*\psi)(z)+R^{(1)}_{a,s}(z,\tau)\},
\end{align}
where
\[
 |R^{(1)}_{a,s}|+|\tau\partial_\tau R^{(1)}_{a,s}|
 \le C_Rs^2.
\]

For the denominator, apply
Lemma~\ref{lem:differentiable-kernel-mass} to the constant density $f_0$ at
the point $F_a(sz)$ and the scale $\tau s$.  Uniformly for $\tau\in I$,
\begin{align*}
 Z_{\tau s}(f_0,F_a(sz))
 =s^d(2\pi)^{d/2}\tau^df_0
 \{1+R^{(0)}_{a,s}(z,\tau)\},
\end{align*}
with
\[
 |R^{(0)}_{a,s}|+|\tau\partial_\tau R^{(0)}_{a,s}|
 \le C_Rs^2.
\]
After decreasing $s_R$, the denominator factor in braces is at least
$1/2$.  Division and the quotient rule prove
\eqref{eq:ratio-expansion}--\eqref{eq:ratio-remainder}.
\end{proof}

For $r>0$, define
\begin{equation}
\label{eq:flipd-response-operator}
 \begin{aligned}
 (\mathcal A_r\psi)(z)
 &:=r\partial_r(\gamma_r*\psi)(z)
 =r^2\Delta(\gamma_r*\psi)(z),\\
 \mathcal A\psi&:=\mathcal A_1\psi.
 \end{aligned}
\end{equation}
The second equality follows from the Gaussian heat equation.

\begin{lemma}\label{lem:flipd-response-positive}
There exist $R>1$ and a nonzero mean-zero profile
$\psi_{\FLIPD}\in C_c^\infty(\R^d)$ such that
\begin{equation}
\label{eq:flipd-response-positive}
 E_A:=\int_{B_d(0,R)}
 |\mathcal A\psi_{\FLIPD}(z)|^2\,\dd z>0.
\end{equation}
\end{lemma}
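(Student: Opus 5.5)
The plan is to make an explicit choice of $\psi_{\FLIPD}$ and show that $\mathcal A\psi_{\FLIPD}$ is a nonzero smooth function on $\R^d$. Positivity of $E_A$ for some finite $R$ then follows from continuity. Concretely, take any nonzero $\phi\in C_c^\infty(\R^d)$ and set
\[
\psi_{\FLIPD}:=\Delta\phi .
\]
This profile is smooth and compactly supported. It has mean zero by the divergence theorem, and it is nonzero, because a compactly supported harmonic function vanishes identically. For example, $\phi$ could be a radial bump.

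By \eqref{eq:flipd-response-operator},
\[
\mathcal A\psi_{\FLIPD}=\Delta(\gamma_1*\psi_{\FLIPD})=\gamma_1*\Delta^2\phi .
\]
This is a Schwartz function. To see that it is not identically zero, I would pass to the Fourier side:
\[
\widehat{\mathcal A\psi_{\FLIPD}}(\xi)=-\|\xi\|^2e^{-\|\xi\|^2/2}\,\widehat{\psi_{\FLIPD}}(\xi)=\|\xi\|^4e^{-\|\xi\|^2/2}\,\widehat\phi(\xi).
\]
Here $\widehat\phi$ is real-analytic by Paley--Wiener and not identically zero, so it is nonzero on an open set. The factor $\|\xi\|^4e^{-\|\xi\|^2/2}$ vanishes only at the origin. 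Hence the transform is nonzero on a set of positive measure, and by Plancherel $\mathcal A\psi_{\FLIPD}\not\equiv0$ in $L^2(\R^d)$.

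Since $\mathcal A\psi_{\FLIPD}$ is continuous and not identically zero, it is nonzero at some point $z_0$, and therefore on a small ball around $z_0$. Choosing $R>\max\{1,\|z_0\|+1\}$ makes that ball lie inside $B_d(0,R)$, so the integral in \eqref{eq:flipd-response-positive} is strictly positive. Alternatively, monotone convergence gives $\int_{B_d(0,R)}|\mathcal A\psi_{\FLIPD}|^2\to\|\mathcal A\psi_{\FLIPD}\|_{L^2}^2>0$ as $R\to\infty$.

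I expect no serious obstacle. The only point needing care is ruling out that the Gaussian heat response $r\partial_r(\gamma_r*\psi)$ at $r=1$ vanishes for a mean-zero profile. The Fourier multiplier identity $r\partial_r e^{-r^2\|\xi\|^2/2}=-r^2\|\xi\|^2e^{-r^2\|\xi\|^2/2}$ settles this, since the multiplier is nonzero away from $\xi=0$. Choosing $\psi_{\FLIPD}=\Delta\phi$ is just a convenient way to guarantee mean zero together with nontriviality.
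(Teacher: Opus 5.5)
Your proposal is correct and follows essentially the same route as the paper. You take the profile to be a derivative of a compactly supported bump ($\Delta\phi$ instead of the paper's $\partial_1\zeta$), and you use that the Fourier multiplier $-\|\xi\|^2e^{-\|\xi\|^2/2}$ of $\mathcal A$ vanishes only at $\xi=0$ to get $\mathcal A\psi_{\FLIPD}\not\equiv0$, then finish by continuity; the paper phrases the nonvanishing step as a contradiction via injectivity of the Fourier transform rather than your direct Plancherel argument, which is only a cosmetic difference.
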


\begin{proof}
Choose $\zeta\in C_c^\infty(\R^d)$ such that
$\partial_1\zeta\not\equiv0$, and set
$\psi_{\FLIPD}=\partial_1\zeta$.  Then
$\int\psi_{\FLIPD}=0$.  Under the Fourier transform, $\mathcal A_r$ has
multiplier
\[
 -r^2\|\xi\|^2e^{-r^2\|\xi\|^2/2}.
\]
If $\mathcal A\psi_{\FLIPD}\equiv0$, then
$\widehat\psi_{\FLIPD}(\xi)=0$ for every $\xi\ne0$.  At $\xi=0$ it also
vanishes because $\int\psi_{\FLIPD}=0$.  Injectivity of the Fourier
transform would give $\psi_{\FLIPD}=0$, a contradiction.  Hence
$\mathcal A\psi_{\FLIPD}$ is nonzero at some point.  By continuity, its
squared modulus has positive integral on a bounded ball; enlarging that ball
if necessary gives $R>1$.
\end{proof}

\begin{lemma}\label{lem:flipd-one-bump}
Let $\psi_{\FLIPD}$ be the profile from
Lemma~\ref{lem:flipd-response-positive}, and let $b_{a,s}$ be its manifold
bump.  There exist constants $s_1,c_1,C_q>0$ such that, for every
$a\in\cM$, $0<s\le s_1$, and $0<\theta\le s^2$, the functions
$f_\pm=f_0\pm\theta b_{a,s}$ are positive densities.  With
\begin{equation}
\label{eq:query-region-definition}
 U_{a,s}:=F_a(sB_d(0,R)),
\end{equation}
one has
\begin{align}
 U_{a,s}&\subset\cM\cap B_D(a,C_qs),
 \qquad \vol_\cM(U_{a,s})\le C_qs^d,
 \label{eq:query-region-localization}\\
 \int_{U_{a,s}}
 |T_s(x;f_+)-T_s(x;f_-)|^2\,\dd\vol_\cM(x)
 &\ge c_1\theta^2s^d.
 \label{eq:flipd-one-bump}
\end{align}
\end{lemma}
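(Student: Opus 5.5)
The plan is to linearize $T_s(x;f_\pm)$ in $\theta$ around the uniform density $f_0$, and then use the ratio expansion of Lemma~\ref{lem:ratio-expansion} to identify the first-order response with $\mathcal A\psi_{\FLIPD}$.

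\emph{Admissibility and localization.} Positivity of $f_\pm$ follows from $\|b_{a,s}\|_\infty\le C_b$ and $\theta\le s^2\le s_1^2$, after choosing $s_1$ so that $C_bs_1^2\le f_0/2$. Zero mass comes from Lemma~\ref{lem:scaled-bump}. For \eqref{eq:query-region-localization}, take $s_1R<r_0/4$. The bi-Lipschitz bound \eqref{eq:atlas-bilipschitz} then gives $U_{a,s}\subset B_D(a,C_0Rs)$, and the area formula with \eqref{eq:atlas-jacobian} gives $\vol_\cM(U_{a,s})\le C_0\vol(B_d(0,R))s^d$. So we may take $C_q:=C_0R\vee C_0\vol(B_d(0,R))$.

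\emph{Linearization.} By linearity of $Z$ in $f$, $Z_{\tau s}(f_\pm,x)=Z_{\tau s}(f_0,x)\{1\pm\theta\Phi(x,\tau)\}$, where
\[
\Phi(x,\tau):=Z_{\tau s}(b_{a,s},x)/Z_{\tau s}(f_0,x).
\]
Evaluating $T_s=\tau\partial_\tau\log Z_{\tau s}$ at $\tau=1$, the $f_0$ term cancels in the difference, and
\[
T_s(x;f_+)-T_s(x;f_-)=\tau\partial_\tau\Big[\log(1+\theta\Phi)-\log(1-\theta\Phi)\Big]_{\tau=1}.
\]
For $x=F_a(sz)$ with $\|z\|\le R$, Lemma~\ref{lem:ratio-expansion} gives $|\Phi|+|\tau\partial_\tau\Phi|\le C$ on $\tau\in I$. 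Since $\theta\le s^2$, the map $y\mapsto\log(1+y)-\log(1-y)=2y+O(y^3)$ applies, and differentiating yields
\[
T_s(x;f_+)-T_s(x;f_-)=2\theta\,\tau\partial_\tau\Phi|_{\tau=1}+O(\theta^3).
\]
Here the derivative is $\{(1+\theta\Phi)^{-1}+(1-\theta\Phi)^{-1}\}\theta\,\tau\partial_\tau\Phi$. Applying \eqref{eq:ratio-expansion}--\eqref{eq:ratio-remainder},
\[
\tau\partial_\tau\Phi|_{\tau=1}=f_0^{-1}\mathcal A\psi_{\FLIPD}(z)+O(s^2),
\]
uniformly in $\|z\|\le R$ and $a$, using $\tau\partial_\tau(\gamma_\tau*\psi)=\mathcal A_\tau\psi$ at $\tau=1$. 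Hence
\[
T_s(F_a(sz);f_+)-T_s(F_a(sz);f_-)=\frac{2\theta}{f_0}\mathcal A\psi_{\FLIPD}(z)+O(\theta s^2).
\]

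\emph{Integration.} Change variables $x=F_a(sz)$. By the area formula and $J_a\ge 1$ from \eqref{eq:jacobian-singular-values},
\[
\int_{U_{a,s}}|T_s(f_+)-T_s(f_-)|^2\,\dd\vol_\cM\ge s^d\int_{B_d(0,R)}\Big|\frac{2\theta}{f_0}\mathcal A\psi_{\FLIPD}+O(\theta s^2)\Big|^2\dd z.
\]
Using $|u+v|^2\ge \tfrac12|u|^2-|v|^2$, the right side is at least
\[
s^d\theta^2\Big(\frac{2E_A}{f_0^2}-C\vol(B_d(0,R))s^4\Big),
\]
with $E_A>0$ from Lemma~\ref{lem:flipd-response-positive}. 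Shrinking $s_1$ so that the remainder is at most $E_A/f_0^2$ gives \eqref{eq:flipd-one-bump} with $c_1=E_A/f_0^2$.

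\emph{Main obstacle.} The delicate step is the uniform control of $\tau\partial_\tau\Phi$, that is, making sure the $O(s^2)$ remainder in Lemma~\ref{lem:ratio-expansion} survives the scale derivative uniformly in $a$ and $z$. This is exactly what \eqref{eq:ratio-remainder} supplies, so the remaining care is in bookkeeping the quotient-rule terms and keeping every constant independent of $a$.
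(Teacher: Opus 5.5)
Your proposal is correct and follows the paper's proof essentially step for step. It uses the same choice of $s_1$ (via $s_1R<r_0/4$ and $C_bs_1^2\le f_0/2$) and the same ratio $\Phi=q_{r,s}$ fed into Lemma~\ref{lem:ratio-expansion}. It then rests on the exact identity $\frac{2\theta\,r\partial_r q}{1-\theta^2q^2}$ and the same $|u+v|^2\ge\frac12|u|^2-|v|^2$ integration step, with the only cosmetic differences being your use of $J_a\ge1$ instead of $J_a\ge c_0$ and your explicit constants $C_q$ and $c_1=E_A/f_0^2$.
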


\begin{proof}
Choose $s_1$ so that $s_1R<r_0/4$ and
$s_1^2C_b\le f_0/2$.  Lemma~\ref{lem:scaled-bump} then shows that
$f_\pm\ge f_0/2$ and that both functions integrate to one.

For $x=F_a(sz)$ and $r\in I$, define
\[
 q_{r,s}(z):=
 \frac{Z_{rs}(b_{a,s},x)}{Z_{rs}(f_0,x)}.
\]
Lemma~\ref{lem:ratio-expansion} gives, uniformly for
$z\in B_d(0,R)$ and $r\in I$,
\begin{align}
 q_{r,s}(z)
 &=f_0^{-1}(\gamma_r*\psi_{\FLIPD})(z)+O(s^2),
 \label{eq:q-expansion}\\
 r\partial_rq_{r,s}(z)
 &=f_0^{-1}(\mathcal A_r\psi_{\FLIPD})(z)+O(s^2).
 \label{eq:q-derivative-expansion}
\end{align}
In particular, $|q_{r,s}(z)|\le C$.  Decrease $s_1$ so that
$\theta|q_{r,s}(z)|\le1/2$ whenever $\theta\le s^2$.  The logarithms below
are then well defined, and
\begin{align*}
 T_s(x;f_+)-T_s(x;f_-)
 &=\left.r\partial_r
 \log\frac{1+\theta q_{r,s}(z)}{1-\theta q_{r,s}(z)}
 \right|_{r=1}\\
 &=\left.
 \frac{2\theta r\partial_rq_{r,s}(z)}
 {1-\theta^2q_{r,s}(z)^2}
 \right|_{r=1}.
\end{align*}
Using \eqref{eq:q-expansion}--\eqref{eq:q-derivative-expansion} and
$(1-y)^{-1}=1+O(y)$ for $|y|\le1/4$, we obtain
\begin{equation}
\label{eq:one-bump-response-expansion}
 T_s(F_a(sz);f_+)-T_s(F_a(sz);f_-)
 =\frac{2\theta}{f_0}\mathcal A\psi_{\FLIPD}(z)
 +\rho_s(z),
\end{equation}
where
\begin{equation}
\label{eq:one-bump-response-remainder}
 \sup_{a\in\cM}\sup_{z\in B_d(0,R)}|\rho_s(z)|
 \le C(\theta s^2+\theta^3).
\end{equation}
Since $\theta\le s^2\le1$, the right-hand side is at most $C\theta s^2$.

The bi-Lipschitz and Jacobian bounds give
\eqref{eq:query-region-localization}.  Changing variables $x=F_a(sz)$ in
the squared response, using $J_a(sz)\ge c_0$, and applying
$|u+v|^2\ge |u|^2/2-|v|^2$ yield
\begin{align*}
 &\int_{U_{a,s}}|T_s(x;f_+)-T_s(x;f_-)|^2
 \,\dd\vol_\cM(x)\\
 &\qquad\ge
 cs^d\left\{
 \frac{2\theta^2}{f_0^2}E_A-C\theta^2s^4
 \right\}.
\end{align*}
After decreasing $s_1$ once more, the term in braces is bounded below by
$c\theta^2$.  This proves \eqref{eq:flipd-one-bump} uniformly in $a$.
\end{proof}

The last auxiliary lemma upgrades the preceding isolated separation to the
full Assouad hypercube.

\begin{lemma}\label{lem:multi-bump-stability}
There exist a fixed $L_0>0$ and constants $s_2,c_2>0$ such that the
following holds.  Let $0<s\le s_2$, let
$a_1,\ldots,a_m\in K$ be ambiently $L_0s$-separated, and let
$b_k=b_{a_k,s}$.  For every $\omega\in\{-1,1\}^m$, every $j$, and every
$0<\theta\le s^2$,
\begin{equation}
\label{eq:multi-bump-response}
 \int_{U_{a_j,s}}
 \left|T_s(x;f_\omega)-T_s(x;f_{\omega^{(j)}})\right|^2
 \,\dd\vol_\cM(x)
 \ge c_2\theta^2s^d,
\end{equation}
where
$f_\omega=f_0+\theta\sum_{k=1}^m\omega_kb_k$.
\end{lemma}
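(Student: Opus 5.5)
The plan is to reduce the multi-bump response to the isolated one-bump response of Lemma~\ref{lem:flipd-one-bump}, treating all other bumps as a perturbation controlled by Lemma~\ref{lem:shell-decoupling}. Fix $\omega$, $j$, and $x=F_{a_j}(sz)\in U_{a_j,s}$ with $\|z\|\le R$. Then $\|x-a_j\|\le C_qs$, with $C_q$ from \eqref{eq:query-region-localization}. Write $g:=f_0+\theta\sum_{k\ne j}\omega_kb_k$, so that $f_\omega=g+\omega_j\theta b_j$ and $f_{\omega^{(j)}}=g-\omega_j\theta b_j$. Without loss of generality $\omega_j=1$. By linearity of $Z$ in the density, and writing $Z_{rs}(\cdot)$ for $Z_{rs}(\cdot,x)$,
\[
 T_s(x;f_\omega)-T_s(x;f_{\omega^{(j)}})
 =\left.r\partial_r\log\frac{Z_{rs}(g)+\theta Z_{rs}(b_j)}{Z_{rs}(g)-\theta Z_{rs}(b_j)}\right|_{r=1}
 =\left.\frac{2\theta\, r\partial_r\widetilde q_{r}}{1-\theta^2\widetilde q_r^{\,2}}\right|_{r=1},
\]
where $\widetilde q_r:=Z_{rs}(b_j)/Z_{rs}(g)$. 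This is exactly the computation in the proof of Lemma~\ref{lem:flipd-one-bump}, with $f_0$ replaced by $g$ in the denominator.

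The key step is to compare $\widetilde q_r$ with $q_{r,s}=Z_{rs}(b_j)/Z_{rs}(f_0)$. Set $E_r:=\theta\sum_{k\ne j}\omega_kZ_{rs}(b_k,x)$, so that $Z_{rs}(g)=Z_{rs}(f_0)+E_r$. Apply Lemma~\ref{lem:shell-decoupling} with $m=0$ and $\ell\in\{0,1\}$. This gives, uniformly for $r\in I$,
\[
 |E_r|+|r\partial_rE_r|\le C\theta s^d\delta(L_0),
\]
with $\delta(L_0)$ as in \eqref{eq:delta-L0-definition}. The constant is uniform in $m$ and $\omega$, which is the point of the shell counting. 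Lemma~\ref{lem:localization} shows $Z_{rs}(f_0)\asymp s^d$, and its $r\partial_r$ derivative is $O(s^d)$ by \eqref{eq:normalized-mass-derivative}. Hence
\[
 \frac{Z_{rs}(g)}{Z_{rs}(f_0)}=1+\epsilon_r,
 \qquad |\epsilon_r|+|r\partial_r\epsilon_r|\le C\theta\delta(L_0)\le Cs^2.
\]
The two quantities are related by $\widetilde q_r=q_{r,s}/(1+\epsilon_r)$. Combined with \eqref{eq:q-expansion}--\eqref{eq:q-derivative-expansion}, this yields
\[
 r\partial_r\widetilde q_r\big|_{r=1}=f_0^{-1}\mathcal A\psi_{\FLIPD}(z)+O(s^2),
 \qquad |\widetilde q_r|\le C.
\]
Since $\theta\le s^2$, the expansion \eqref{eq:one-bump-response-expansion} therefore holds for $f_\omega,f_{\omega^{(j)}}$ with remainder $O(\theta s^2)$, uniformly in $\omega,j,m$. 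The remote bumps thus contribute only at relative order $\theta\le s^2$. Smallness of $\delta(L_0)$ is not even needed, only its finiteness; I would still fix $L_0\ge L_*$ and $L_0>2(C_b+C_q)$ so that the supports and query regions are disjoint and Lemma~\ref{lem:shell-decoupling} applies.

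The conclusion then follows as at the end of Lemma~\ref{lem:flipd-one-bump}. Change variables $x=F_{a_j}(sz)$, use $J\ge c_0$ and $|u+v|^2\ge|u|^2/2-|v|^2$, and use $E_A>0$ from Lemma~\ref{lem:flipd-response-positive}. This gives a lower bound $cs^d\theta^2(2E_A/f_0^2-Cs^4)\ge c_2\theta^2s^d$ for $s\le s_2$.

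I expect the main obstacle to be controlling the derivative $r\partial_r E_r$ uniformly over arbitrarily many bumps. A naive sum over $k$ would give a factor $m\sim s^{-d}$, which would destroy the bound. This is exactly what the shell decomposition in Lemma~\ref{lem:shell-decoupling} with $\ell=1$ avoids. A secondary check is positivity of $Z_{rs}(g)\pm\theta Z_{rs}(b_j)$, which follows from $|\theta\widetilde q_r|\le Cs^2\le1/2$ for small $s$.
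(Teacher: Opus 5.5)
Your argument is correct and is essentially the paper's. Both use the same remote term $E_r$, the same application of Lemma~\ref{lem:shell-decoupling} with $\ell\in\{0,1\}$ to get $|E_r|+|r\partial_rE_r|\le C\theta s^d\delta(L_0)$, and the same comparison of $Z_{rs}(b_j,x)/Z_{rs}(g,x)$ with its isolated counterpart, with $\theta\le s^2$ absorbing all errors once $L_0$ is fixed; the paper likewise needs only finiteness of $\delta(L_0)$.

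The only difference is mechanical. The paper Taylor-expands $t\mapsto H_x(t)$ to third order and closes with an $L^2$ triangle inequality against the conclusion of Lemma~\ref{lem:flipd-one-bump}. You instead use the exact identity $2\theta\, r\partial_r\widetilde q_r/(1-\theta^2\widetilde q_r^{\,2})$ and rerun that lemma's final step, which is slightly more direct.
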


\begin{proof}
Choose $L_0$ large enough that Lemma~\ref{lem:shell-decoupling} applies and
$L_0>2(C_b+C_q)$.  Fix the signs $\omega_k$, $k\ne j$, and define, for
$|t|\le\theta$,
\begin{equation}
\label{eq:density-paths}
 g_t:=f_0+\theta\sum_{k\ne j}\omega_kb_k+t b_j,
 \qquad
 g_t^{(0)}:=f_0+t b_j.
\end{equation}
The bump supports are disjoint.  Hence, after decreasing $s_2$ so that
$s_2^2C_b\le f_0/2$, both paths consist of positive densities satisfying
\begin{equation}
\label{eq:path-density-bounds}
 f_0/2\le g_t,g_t^{(0)}\le 3f_0/2
 \qquad (|t|\le\theta).
\end{equation}

For $r\in I$ and $x\in U_{a_j,s}$, put
\[
 D_r(t):=Z_{rs}(g_t,x),
 \qquad
 D_r^{(0)}(t):=Z_{rs}(g_t^{(0)},x),
 \qquad
 B_r:=Z_{rs}(b_j,x).
\]
The denominator bounds use only the pointwise inequalities in
\eqref{eq:path-density-bounds}; the path densities need not themselves be
members of $\mathcal F_\alpha(\cM)$.  Indeed,
\begin{equation}
\label{eq:path-denominator-comparison}
 \frac{f_0}{2}\int_\cM K_{rs,x}(u)\,\dd\vol_\cM(u)
 \le D_r(t),D_r^{(0)}(t)
 \le\frac{3f_0}{2}\int_\cM K_{rs,x}(u)\,\dd\vol_\cM(u).
\end{equation}
The chart and complement calculation in the proof of
Lemma~\ref{lem:localization}, applied to the constant integrand, shows that
the kernel integral in \eqref{eq:path-denominator-comparison} lies between
$cs^d$ and $Cs^d$, uniformly for $r\in I$.  Furthermore,
\begin{equation}
\label{eq:path-kernel-scale-derivative}
 |r\partial_rK_{rs,x}(u)|
 =\frac{\|u-x\|^2}{r^2s^2}K_{rs,x}(u)
 \le C\left(1+\frac{\|u-x\|}{s}\right)^2
 e^{-c\|u-x\|^2/s^2}.
\end{equation}
The same chart calculation with the polynomial weight in
\eqref{eq:path-kernel-scale-derivative} bounds its integral by $Cs^d$.
Finally, $\|b_j\|_\infty\le C$, its support has volume at most $Cs^d$,
and, for $x\in U_{a_j,s}$ and $u\in\supp(b_j)$,
$\|u-x\|\le(C_b+C_q)s$.  These observations give, uniformly in
$r\in I$, $|t|\le\theta$, and $x\in U_{a_j,s}$,
\begin{align}
 cs^d\le D_r(t),D_r^{(0)}(t)&\le Cs^d,
 \label{eq:path-denominator-bounds}\\
 |B_r|+|r\partial_rB_r|
 +|r\partial_rD_r(t)|+|r\partial_rD_r^{(0)}(t)|&\le Cs^d.
 \label{eq:path-derivative-bounds}
\end{align}
Define the remote contribution
\[
 E_r:=D_r(t)-D_r^{(0)}(t)
 =\theta\sum_{k\ne j}\omega_kZ_{rs}(b_k,x).
\]
It is independent of $t$.  Lemma~\ref{lem:shell-decoupling}, with zero and
one kernel-scale derivative, yields
\begin{equation}
\label{eq:remote-denominator-bound}
 |E_r|+|r\partial_rE_r|
 \le C\theta s^d\delta(L_0).
\end{equation}

Set
\[
 H_x(t):=\left.r\partial_r\log D_r(t)\right|_{r=1},
 \qquad
 H_x^{(0)}(t):=\left.r\partial_r\log D_r^{(0)}(t)\right|_{r=1}.
\]
All derivatives below may be passed under the integral by dominated
convergence; for $r\in I$ the differentiated kernels are bounded by a fixed
polynomial times a Gaussian, and the denominators are bounded away from zero
by \eqref{eq:path-denominator-bounds}.  Since
$\partial_tD_r(t)=B_r$, direct differentiation gives
\begin{equation}
\label{eq:flipd-path-derivatives}
 \begin{aligned}
 H_x'(0)
 &=\left.r\partial_r\frac{B_r}{D_r(0)}\right|_{r=1},\\
 H_x'''(t)
 &=\left.r\partial_r
 \left(2\frac{B_r^3}{D_r(t)^3}\right)\right|_{r=1}.
 \end{aligned}
\end{equation}
For clarity, write a dot for $r\partial_r$.  Then
\[
 \left(\frac{B_r}{D_r}\right)^{\boldsymbol\cdot}
 =\frac{\dot B_r}{D_r}-\frac{B_r\dot D_r}{D_r^2}.
\]
Using \eqref{eq:path-denominator-bounds}--
\eqref{eq:remote-denominator-bound} in this identity, once with $D_r$ and
once with $D_r^{(0)}$, gives
\begin{equation}
\label{eq:first-path-derivative-stability}
 |H_x'(0)-(H_x^{(0)})'(0)|
 \le C\theta\delta(L_0).
\end{equation}
Similarly,
\[
 \left(2\frac{B_r^3}{D_r(t)^3}\right)^{\boldsymbol\cdot}
 =6\frac{B_r^2\dot B_r}{D_r(t)^3}
 -6\frac{B_r^3\dot D_r(t)}{D_r(t)^4},
\]
so \eqref{eq:path-denominator-bounds}--
\eqref{eq:path-derivative-bounds} imply
\begin{equation}
\label{eq:third-path-derivative-bound}
 \sup_{|t|\le\theta}
 \{|H_x'''(t)|+|(H_x^{(0)})'''(t)|\}
 \le C.
\end{equation}

Taylor's theorem at $t=0$ gives
\[
 |H_x(\theta)-H_x(-\theta)-2\theta H_x'(0)|
 \le\frac{\theta^3}{3}\sup_{|t|\le\theta}|H_x'''(t)|,
\]
and the same bound holds for $H_x^{(0)}$.  Combining this with
\eqref{eq:first-path-derivative-stability} and
\eqref{eq:third-path-derivative-bound} yields, uniformly on
$U_{a_j,s}$,
\begin{equation}
\label{eq:pointwise-remote-response-bound}
 \left|
 \{H_x(\theta)-H_x(-\theta)\}
 -\{H_x^{(0)}(\theta)-H_x^{(0)}(-\theta)\}
 \right|
 \le C\{\theta^2\delta(L_0)+\theta^3\}.
\end{equation}
By \eqref{eq:query-region-localization}, the $L^2(U_{a_j,s})$ norm of
the difference in \eqref{eq:pointwise-remote-response-bound} is at most
\[
 C\{\theta^2\delta(L_0)+\theta^3\}s^{d/2}.
\]
On the other hand, Lemma~\ref{lem:flipd-one-bump} gives
\[
 \|H_\cdot^{(0)}(\theta)-H_\cdot^{(0)}(-\theta)\|_{L^2(U_{a_j,s})}
 \ge \sqrt{c_1}\,\theta s^{d/2}.
\]
Since $\theta\le s^2$, the ratio of the former norm to the latter is at
most
\[
 C\{s^2\delta(L_0)+s^4\}.
\]
After $L_0$ is fixed, decrease $s_2$ so that this ratio is at most $1/2$.
The triangle inequality then gives
\[
 \|H_\cdot(\theta)-H_\cdot(-\theta)\|_{L^2(U_{a_j,s})}
 \ge \frac{\sqrt{c_1}}2\theta s^{d/2}.
\]
Squaring proves \eqref{eq:multi-bump-response} with
$c_2=c_1/4$.  Notice that the constant is a quarter, rather than a half, of
the isolated squared-separation constant because the comparison was made at
the level of norms.
\end{proof}

\subsection{Main Proof of Theorem~\ref{thm:minimax-lower}}

\begin{proof}
Let $L_0$ be the fixed separation constant from
Lemma~\ref{lem:multi-bump-stability}.  That lemma verifies the adjacent-edge
condition \eqref{eq:generic-one-bump} in
Proposition~\ref{prop:assouad-reduction} with $\kappa=0$, the query regions
\eqref{eq:query-region-definition}, and the constant $c_A=c_2$.

Let $\sigma_0>0$ be the minimum of $1$ and the finitely many upper-scale
constants used in Lemmas~\ref{lem:scaled-bump},
\ref{lem:shell-decoupling}, \ref{lem:ratio-expansion},
\ref{lem:flipd-one-bump}, and \ref{lem:multi-bump-stability}, and in
Proposition~\ref{prop:assouad-reduction}.  Choose $n_0$ so that
\[
 h_n=n^{-1/(2\alpha+d)}\le\sigma_0
 \qquad\text{for every }n\ge n_0.
\]
For $n\ge n_0$ and $h_n\le\sigma\le\sigma_0$, apply
Proposition~\ref{prop:assouad-reduction} with $s=\sigma$ and $\kappa=0$.
It follows that
\[
 \mathfrak R_{n,\sigma}
 \ge c(n\sigma^d)^{-1},
\]
where $c>0$ is independent of $n$ and $\sigma$.  This is
\eqref{eq:main-flipd-lower} and proves
Theorem~\ref{thm:minimax-lower}.
\end{proof}

\end{document}